\newtheorem{lemma}{Lemma}
\newtheorem{proposition}{Proposition}
\begin{document}

\title{\Large Optimally rotated coordinate systems for adaptive least-squares regression on sparse grids\thanks{Supported by CRC 1060 - The Mathematics of Emergent Effects
funded by the Deutsche Forschungsgemeinschaft.}}
\author{Bastian Bohn\thanks{Institute for Numerical Simulation, University of Bonn, Endenicher Allee 19b, 53115 Bonn, Germany,
\texttt{bohn}/\texttt{griebel}/\texttt{oettersh@ins.uni-bonn.de}} \and 
Michael Griebel\thanks{Fraunhofer Center for Machine Learning, Fraunhofer Institute for Algorithms and Scientific Computing SCAI, Schloss Birlinghoven, 53754 Sankt
Augustin, Germany} \footnotemark[2]
\and
Jens Oettershagen\footnotemark[2]}
\date{}

\maketitle

\newcommand{\ep}{\varepsilon}
\newcommand{\spara}{\shortparallel}
\newcommand{\sperp}{{\scriptscriptstyle\perp}}
\newcommand{\re}{{\mbox{\tiny ref}}}
\newcommand{\Hess}{{\mbox{Hess}}}

\renewcommand{\figurename}{Fig.}  

\newcommand{\SO}{{\mbox{\textbf{SO}}}}
\newcommand{\so}{{{\mathfrak{so}}}}
\newcommand{\St}{{\mbox{\textbf{St}}}}
\newcommand{\grad}{\nabla}
\newcommand{\rd}{\,\mathrm{d}}

\providecommand{\N}{\ensuremath{\mathbb{N}}}
\providecommand{\R}{\ensuremath{\mathbb{R}}}
\providecommand{\C}{\ensuremath{\mathbb{C}}}
\providecommand{\E}{\ensuremath{\mathbb{E}}}
\providecommand{\PP}{\ensuremath{\mathbb{P}}}

\providecommand{\cA}{\ensuremath{\mathcal{A}}}
\providecommand{\cB}{\ensuremath{\mathcal{B}}}
\providecommand{\cC}{\ensuremath{\mathcal{C}}}
\providecommand{\cD}{\ensuremath{\mathcal{D}}}
\providecommand{\cE}{\ensuremath{\mathcal{E}}}
\providecommand{\cF}{\ensuremath{\mathcal{F}}}
\providecommand{\cH}{\ensuremath{\mathcal{H}}}
\providecommand{\cI}{\ensuremath{\mathcal{I}}}
\providecommand{\cK}{\ensuremath{\mathcal{K}}}
\providecommand{\cL}{\ensuremath{\mathcal{L}}}
\providecommand{\cM}{\ensuremath{\mathcal{M}}}
\providecommand{\cN}{\ensuremath{\mathcal{N}}}
\providecommand{\cO}{\ensuremath{\mathcal{O}}}
\providecommand{\cP}{\ensuremath{\mathcal{P}}}
\providecommand{\cR}{\ensuremath{\mathcal{R}}}
\providecommand{\cS}{\ensuremath{\mathcal{S}}}
\providecommand{\cT}{\ensuremath{\mathcal{T}}}
\providecommand{\cU}{\ensuremath{\mathcal{U}}}
\providecommand{\cV}{\ensuremath{\mathcal{V}}}

\providecommand{\fM}{\ensuremath{\mathfrak{M}}}
\providecommand{\fp}{\ensuremath{\mathfrak{p}}}
\providecommand{\fP}{\ensuremath{\mathfrak{P}}}

\newcommand{\setu}{\mathrm{\mathbf{u}}} 
\newcommand{\setv}{\mathrm{\mathbf{v}}}	
\newcommand{\setw}{\mathrm{\mathbf{w}}}	
\newcommand{\setA}{\mathrm{\mathbf{A}}}
\newcommand{\setQ}{\mathrm{\mathbf{Q}}}
\newcommand{\setB}{\mathrm{\mathbf{B}}}
\newcommand{\setC}{\mathrm{\mathbf{C}}}

\newcommand{\bsa}{{\boldsymbol{a}}}	
\newcommand{\bsb}{{\boldsymbol{b}}}	
\newcommand{\bsc}{{\boldsymbol{c}}}	
\newcommand{\bsd}{{\boldsymbol{d}}}
\newcommand{\bsg}{{\boldsymbol{g}}}
\newcommand{\bsh}{{\boldsymbol{h}}}
\newcommand{\bsi}{{\boldsymbol{i}}}
\newcommand{\bsj}{{\boldsymbol{j}}}
\newcommand{\bsk}{{\boldsymbol{k}}}
\newcommand{\bsl}{{\boldsymbol{l}}}
\newcommand{\bsp}{{\boldsymbol{p}}}
\newcommand{\bsq}{{\boldsymbol{q}}}
\newcommand{\bsr}{{\boldsymbol{r}}}
\newcommand{\bss}{{\boldsymbol{s}}}
\newcommand{\bst}{{\boldsymbol{t}}}
\newcommand{\bsu}{{\boldsymbol{u}}}
\newcommand{\bsv}{{\boldsymbol{v}}}	
\newcommand{\bsw}{{\boldsymbol{w}}}	
\newcommand{\bsx}{{\boldsymbol{x}}}	
\newcommand{\bsy}{{\boldsymbol{y}}}	
\newcommand{\bsz}{{\boldsymbol{z}}}	

\newcommand{\bsxi}{{\boldsymbol{\xi}}}
\newcommand{\bsnu}{{\boldsymbol{\nu}}}

\newcommand{\bsgg}{{\boldsymbol{\gamma}}}	
\newcommand{\bsaa}{{\boldsymbol{\alpha}}}	
\newcommand{\bsbb}{{\boldsymbol{\beta}}}

\newcommand{\bsA}{{\boldsymbol{A}}}
\newcommand{\bsB}{{\boldsymbol{B}}}
\newcommand{\bsC}{{\boldsymbol{C}}}
\newcommand{\bsD}{{\boldsymbol{D}}}	
\newcommand{\bsH}{{\boldsymbol{H}}}	
\newcommand{\bsI}{{\boldsymbol{I}}}	
\newcommand{\bsL}{{\boldsymbol{L}}}	
\newcommand{\bsM}{{\boldsymbol{M}}}	
\newcommand{\bsQ}{{\boldsymbol{Q}}}	
\newcommand{\bsR}{{\boldsymbol{R}}}	
\newcommand{\bsS}{{\boldsymbol{S}}}	
\newcommand{\bsT}{{\boldsymbol{T}}}	
\newcommand{\bsU}{{\boldsymbol{U}}}	
\newcommand{\bsV}{{\boldsymbol{V}}}	
\newcommand{\bsW}{{\boldsymbol{W}}}	
\newcommand{\bsX}{{\boldsymbol{X}}}	
\newcommand{\bsY}{{\boldsymbol{Y}}}	
\newcommand{\bsZ}{{\boldsymbol{Z}}}	

\newcommand{\MM}{\mathfrak{M}}
\newcommand{\into}{\rightarrow}
\newcommand{\imp}{\Rightarrow}
\newcommand{\equ}{\Leftrightarrow}

\newcommand{\todo}[1]{\ \\ \fbox{\textbf{TODO: #1}} \\}
\newcommand{\sql}[1]{\ \\ \fbox{\parbox[c]{17cm}{ #1 }} \\}
\newcommand{\cusqrt}[1]{\sqrt[3]{#1}}

\newcommand{\divv}{\mbox{ div } }

\newcommand{\argmin}[1]{\underset{#1}{\operatorname{arg}\,\operatorname{min}}\;}
\newcommand{\argmax}[1]{\underset{#1}{\operatorname{arg}\,\operatorname{max}}\;}

\renewcommand{\d}{\ensuremath\text{d}}

\begin{abstract} \small\baselineskip=9pt
For low-dimensional data sets with a large amount of data points, standard kernel methods are usually not feasible for regression anymore. 
Besides simple linear models or involved heuristic deep learning models, grid-based discretizations of larger (kernel) model classes lead 
to algorithms, which naturally scale linearly in the amount of data points.
For moderate-dimensional or high-dimensional regression tasks, these grid-based discretizations suffer from the curse of dimensionality. 
Here, sparse grid methods have proven to circumvent this problem to a large extent.
In this context, space- and dimension-adaptive sparse grids, which can detect and exploit a given low effective dimensionality
of nominally high-dimensional data, are particularly successful. They nevertheless rely on an axis-aligned structure of the 
solution and exhibit issues for data with predominantly skewed and rotated coordinates.
\\
In this paper we propose a preprocessing approach for these adaptive sparse grid algorithms that determines an optimized,
problem-dependent coordinate system and, thus, reduces the effective dimensionality of a given data set in the ANOVA sense.
We provide numerical examples on synthetic data as well as real-world data to show how an adaptive sparse grid least squares algorithm benefits from our preprocessing method.
\\
{\bf Keywords}: effective dimensionality, ANOVA decomposition, adaptive sparse grids, least-squares regression.
\end{abstract}

\section{Introduction}

In function regression, we determine $f$ from an admissible set $S$ which best approximates given data $(\bst_i, x_i)_{i=1}^N \subset \R^d \times \R$,  
i.e.\ $f(\bst_i) \approx x_i$ for $i=1,\ldots,N$.
While for deep neural network classes $S$ a complete theoretical foundation is still missing,
the famous representer theorem provides a direct way to compute $f$ if $S$ is a subset of a reproducing kernel Hilbert space ${\cH}$ \cite{Smola}.
However, the cost complexity of the underlying algorithm usually scales at least quadratically in $N$.

An easy and straightforward way to achieve linear cost complexity with respect to $N$ is to employ grid-based discretizations of localized functions from ${\cH}$. 
 However, standard tensor grids can only be used up to dimension $d=3$ because of the exponential dependence of the underlying computational costs with respect to $d$.
 This effect resembles the well-known \emph{curse of dimensionality}. Using a sparse grid discretization, which relies on the boundedness of mixed derivatives up to a fixed order,
 this exponential dependence is mitigated significantly while the discretization error is almost of the same order as for tensor grids \cite{Bungartz.Griebel:2004}.

 A further reduction of the costs in sparse grid regression
 can be achieved when adaptive variants are employed. 
 Here, the algorithm adapts in an a posteriori way to the underlying structure of the data \cite{pfluegerpeher}.
 This is particularly helpful if $f$ only depends on $k < d$ variables
 and is (nearly) constant along the remaining $d - k$ directions. In this 
 case $k$ is called the \emph{effective dimension} of $f$.
 Note that this is directly related to the concept of an analysis of variance 
 (ANOVA) decomposition in statistics. Furthermore, there is a direct connection between the ANOVA decomposition and certain sparse grid discretizations,
 see also \cite{Feuersaenger:2010}. 

Instead of using an Euclidean coordinate system, it is common 
to employ problem-dependent coordinates which simplify the description of the underlying problem:
Any (sufficiently differentiable) bijection $\psi: \R^d \rightarrow \R^d$ describes a coordinate transformation.
Then, if $f: \R^d \rightarrow \R$ approximates the data $(\bst_i, x_i)$, the function $\bar{f} = f \circ \psi: \R^d \rightarrow \R$
approximates the transformed dataset $(\psi^{-1}(\bst_i), x_i)_{i=1}^N$. 
The question is which bijections $\psi$ yield a small effective dimension of the transformed data set in the ANOVA sense, 
are still sufficiently cheap to represent, and allow for a more efficient approximation of $\bar{f}$ than the one using $\psi = \operatorname{id}$.

In this work, we concentrate on problems of effective (truncation) dimension $k < d$.
More specifically, for a problem with effective dimension $k$, we will focus on transformations from the \emph{Stiefel manifold} $V_{k}(\R^d)$,
which consists of all orthogonal $k$-frames in $\R^d$.
Our goal is to find a $\bsQ \in V_{k}(\R^d)$ such that the computational costs of learning the data $(\bsQ^T \bst_i, x_i)$ are as small as possible.
However, we do not want to solve the regression problem $(\bsQ^T \, \bst_i, x_i)_{i=1}^N$ for each candidate $\bsQ \in V_{k}(\R^d)$ involved in the optimization process.
Therefore, we make an approximation to the true function $f$ by a homogeneous $d$-variate polynomial $p \in \cP^{(d)}_m$ of total degree less than $m$,
which has relatively few degrees of freedom and is invariant under orthogonal transformations.
This means that we can determine $p$ before searching for the optimal $\bsQ \in V_{k}(\R^d)$
by minimizing the effective dimension of the low degree polynomial $p \circ \bsQ$ with respect to $\bsQ \in V_{k}(\R^d)$.
It is important to note that a crude approximation $p$ to $f$ is sufficient in our case since we are only interested in the rough behavior of the
lower-order ANOVA terms of $f \circ \bsQ$ and not in $f$ itself.
Overall, we aim to efficiently determine $\bsQ \in V_{k}(\R^d)$ such that $f \circ \bsQ$ can be well approximated by an adaptive sparse grid.

There are similarities to other established dimensionality reduction and data transformation algorithms.
For instance, a linear preprocessing technique to
solve multivariate integration problems has been used in \cite{Griebel.Holtz:2008, Imai/Tan:2004}.
Maximizing gradients of transformed functions is the main idea behind the active subspace method \cite{ConstantineBook}.
An active subspace approach based on polynomial surrogates for data-driven tasks can be found in \cite{ConstantineHokanson}. One of the main differences
to our proposed method is that the authors directly minimize the least-squares regression error of a linearly transformed polynomial. This leads to a coupled optimization problem 
in the polynomial $p$ and the linear transformation $\bsQ$. In our case, however, we exploit the fact that a sparse grid discretization
directly benefits from transformations $\bsQ$ for which 
$p \circ \bsQ$ has a small effective ANOVA dimension. Therefore, we will fix $p$ a priorily and then search for a $\bsQ$ which minimizes the effective dimension of $p \circ \bsQ$.
Subsequently, 
we will learn a sparse grid function to approximate $f \circ \bsQ$. In this way, we rely on the polynomial surrogate $p$ only to determine $\bsQ$.
This makes the optimization with respect to $\bsQ$ much easier and still allows for efficient sparse grid discretizations of the underlying regression problem. This procedure is also sketched in Figure \ref{fig:idea}.

The remainder of this article is organized as follows:
In Section 2, we introduce the ANOVA-decomposition and concepts of effective dimensionality.
In Section 3, we reduce the effective dimensionality using coordinate systems from $V_k(\R^d)$. 
In Section 4, we apply this method to machine learning by transforming a data set and learning 
an adaptive sparse grid approximation on the transformed set.
Section 5 contains numerical results to validate the benefit of the coordinate transformation.
In Section 6 we give some concluding remarks.

\begin{figure}[htb] \label{fig:idea}
 \includegraphics[width=0.32\linewidth]{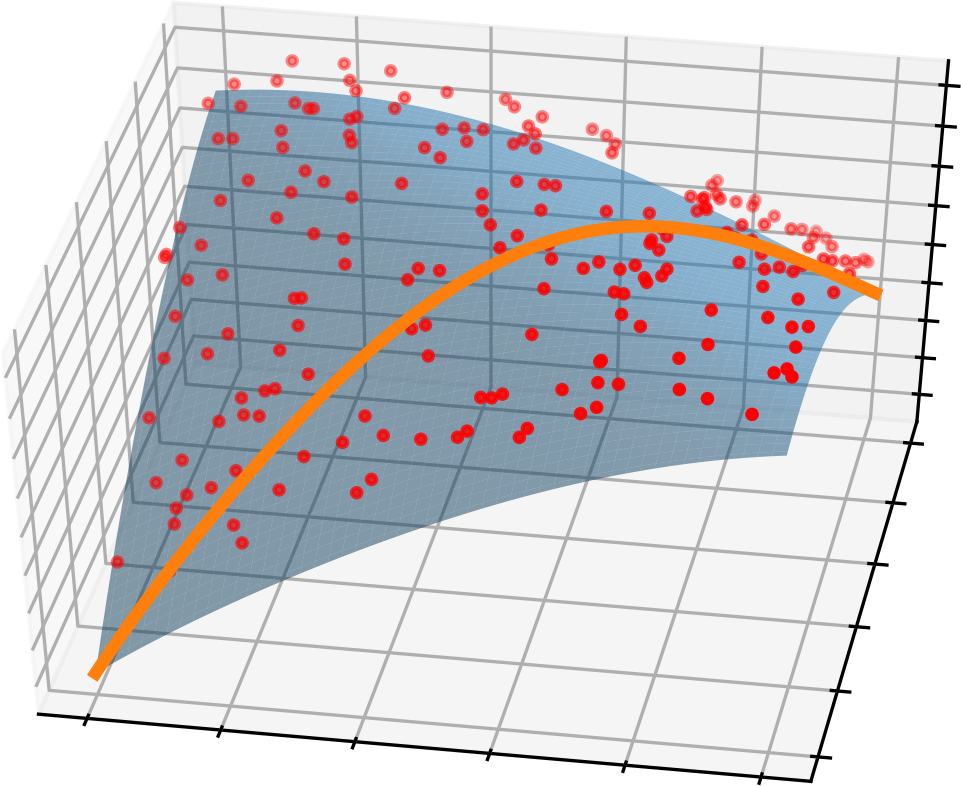}
\includegraphics[width=0.32\linewidth]{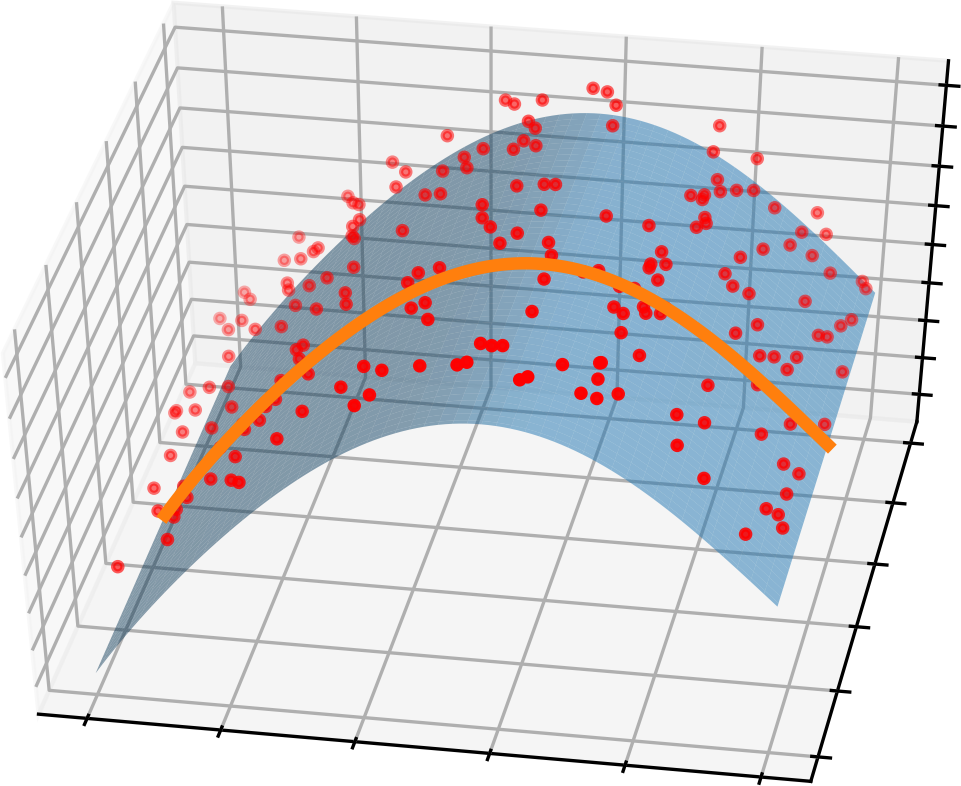}
\includegraphics[width=0.32\linewidth]{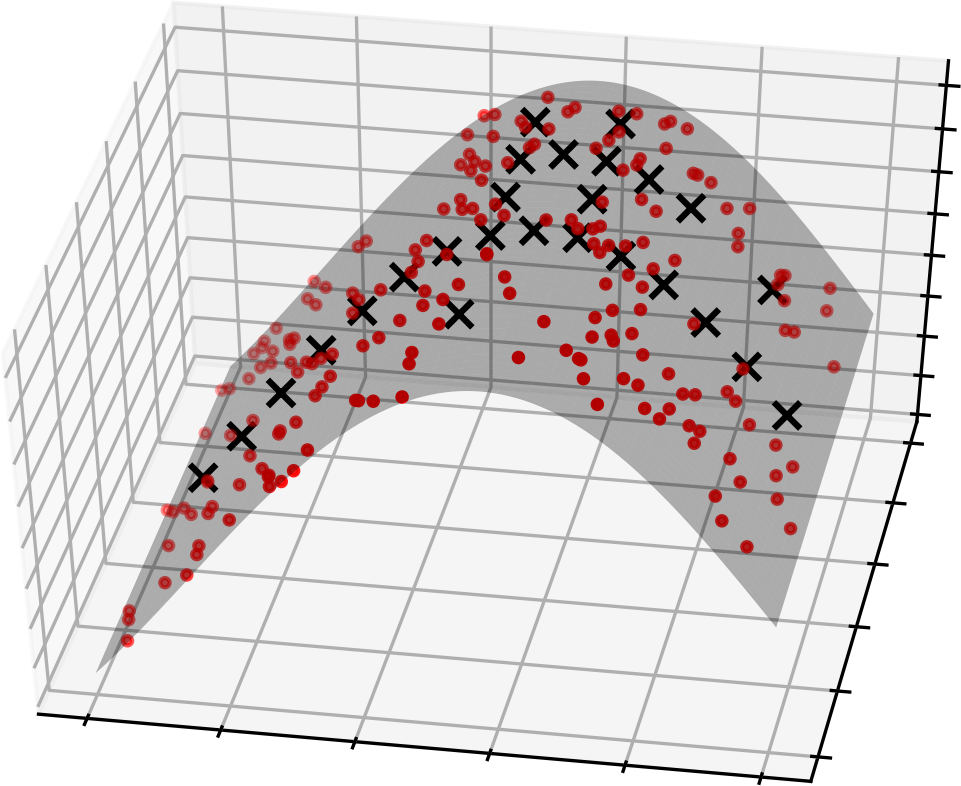}
\caption{Data approximated by a polynomial (left). Polynomial and data after transformation (mid). Adaptive sparse grid approximation of transformed data. (right)}
\end{figure}

\section{Effective dimensionality of functions}
In this section we recall the classical analysis-of-variance (ANOVA) decomposition and the concept of effective dimensionality.
To this end, let  
$\Omega \subseteq \R$ be a fixed domain. For all subsets $\setu \subseteq \cD := \{1,2,\ldots, d\}$,
we define the $|\setu|$-dimensional product domains $\Omega^{|\setu|} \subseteq \R^{|\setu|}$.
In the following, we write $\setu^c$ to denote $\cD \setminus \setu$. Let 
	$\rd \mu(\bsx) = \prod_{j=1}^d \rd \mu_j(x_j)$
be a $d$-dimensional product of probability measures $\mu_j$ on the Borel-algebra of $\Omega$.
The associated measures on $\Omega^{|\setu|}$ are given by $\rd\mu_\setu(\bsx_\setu) := \prod_{j\in \setu} \rd\mu_j(x_j)$,
where $\bsx_\setu$ denotes the $|\setu|$-dimensional vector which contains those components of $\bsx$ with indices in $\setu$.
Let $\cT^{(d)} := L_2(\Omega^d, \mu)$ 
be endowed with the inner product  
\[ (f,g)_{\mu} := \int_{\Omega^d} f(\bsx) g(\bsx) \, \rd\mu(\bsx) \]
and its induced norm $\|f\|^2_{2, \mu} 
= \int_{\Omega^d} f(\bsx)^2 \, \rd\mu(\bsx)$.
For $\setu \subset \cD$, the spaces $\cT^{\setu} := L_2(\Omega^{\setu}, \mu_\setu)$ will be treated as subspaces of $\cT^{(d)}$
by viewing their elements as $d$-variate functions that only depend on the variables $j \in \setu$, i.e.
$$\cT^\setu=\{ f \in \cT^{(d)}: f(\bsx_\setu, \bsy_{\setu^c}) = f(\bsx_\setu, \tilde{\bsy}_{\setu^c}) \ \forall \ {\bsy}_{\setu^c}, \tilde{\bsy}_{\setu^c} \}$$ 

The basic idea behind the ANOVA decomposition is 
to define projections $\cT^{(d)} \rightarrow \cT^\setu$ which will then be employed
to decompose a $d$-variate 
function $f \in \cT^{(d)}$ into a sum of low-dimensional functions, i.e.
\begin{align*}
	f(\bsx) & = f_\emptyset + \sum_{i=1}^d f_{\{i\}}(x_i) + \sum_{\substack{i,j=1,\ldots,d \\ i < j}} f_{\{i,j\}} (x_i, x_j) + \ldots + f_\cD(x_1,\ldots, x_d)
\end{align*}
This sum will be abbreviated by $f(\bsx) = \sum_{\setu \subseteq \cD} f_\setu(\bsx_\setu)$.

\subsection{The ANOVA decomposition.}
We begin by defining the orthogonal projectors $P_\setu: \cT^{(d)} \into \cT^{\setu}$ via
\begin{align*}
	P_\setu (f) (\bsx_\setu) & :=  \int_{\Omega^{\setu^c}} f(\bsx) \, \rd\mu_{\setu^c} (\bsx_{\setu^c}) && \mbox{for } \setu \subsetneq \cD \\
	P_\setu (f) (\bsx) & :=  f(\bsx)  && \mbox{for } \setu = \cD.
\end{align*}
The projections are orthogonal in $\cT^{(d)}$ and hence give the $\cT^{(d)}$-optimal low-dimensional approximation to $f \in \cT^{(d)}$ by functions from $\cT^\setu$.
Now, let 
\begin{equation} \label{eqn:anova_ii}
	f_\setu(\bsx_\setu) := P_\setu(f)(\bsx_\setu) - \sum_{\setv \subsetneq \setu} f_\setv(\bsx_\setv) 
\end{equation}
for all $\bsx_\setu \in \Omega^{\setu}$.
Then it holds 
\begin{equation*}
	f(\bsx) = \sum_{\setu \subseteq \cD} f_\setu(\bsx_\setu) ~ \mbox{ and } \left( f_\setu , f_\setv \right)_{\mu} = 0 \ \text{for } \setu \neq \setv.
\end{equation*}
The $2^d$ summands $f_\setu, \setu \subseteq \cD$, describe the dependence of $f$ on the subset of variables contained in $\setu$. 
Analogously to the definition of the variance of $f \in \cT^{(d)}$, the variance of the ANOVA term $f_\setu$ for a $\setu \neq \emptyset$ is given by
	\begin{align*}
		\sigma_{\setu, \mu} ^2(f) & = \int_{{\Omega}^{\setu}} f_\setu^2 \, \rd\mu_\setu - \underbrace{\left( \int_{{\Omega}^{\setu}} f_\setu \, \rd\mu_\setu \right)^2}_{=0}
		 =  \int_{{\Omega}^{\setu}} f_\setu^2 \, \rd\mu_\setu.
	\end{align*}
	
Then, due to the orthogonality of the ANOVA-decomposition, the variance of $f$ can be decomposed into the sum of the variances of all ANOVA terms, i.e.
	\begin{equation*} \label{eqn:varianz-zerlegung}
		\sigma^2_{\mu}(f)= \sum_{|\setu| > 0} \sigma^2_{\setu, \mu}(f).
	\end{equation*}
We define the auxiliary quantity
\begin{align*} \label{eqn:Du1}
	D_\setu(f) := & \ \sum_{\setv\subseteq \setu} \sigma^2_{\setv,\mu}(f) \stackrel{\eqref{eqn:anova_ii}}{=} 
	\int_{\Omega^\setu} P_\setu(f)^2 \, \rd \mu_{\setu} - f_\emptyset^2
	\\ 
	= & \ \int_{\Omega^\setu} \left(  \int_{\Omega^{\setu^c}} f \rd \mu_{\setu^c} \right)^2 \rd \mu_\setu - f_\emptyset^2 \nonumber
\end{align*}
and use it to determine $\sigma_{\setu,\mu}^2(f)$ recursively via
\begin{equation} \label{eq:varViaSobol}
	\sigma_{\setu,\mu}^2(f) = D_\setu(f) - \sum_{\setv\subsetneq \setu} \sigma^2_{\setv,\mu}(f).
\end{equation}
The value of $D_\setu$ is given by the following Lemma, which we will exploit later. This result has been 
proven in Theorem 2 of \cite{Sobol:2001}.
\begin{lemma} \label{lemma:Du}
For $\setu \subseteq \cD$ it holds
	\begin{align*} \label{eqn:Du_lemma}
		D_\setu(f) + f_\emptyset^2 & = \int_{\Omega^{2d-|\setu|}} f(\bsx_\setu, \bsx_{\setu^c})\, f(\bsx_\setu, \bsy_{\setu^c}) \ \rd\mu_\setu(\bsx_\setu) \rd\mu_{\setu^c}(\bsx_{\setu^c})
		\rd\mu_{\setu^c}(\bsy_{\setu^c}).
	\end{align*}
\end{lemma}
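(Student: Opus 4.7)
The plan is to start from the displayed identity just preceding the lemma, which already gives
\[
D_\setu(f) + f_\emptyset^2 \;=\; \int_{\Omega^\setu} \left( \int_{\Omega^{\setu^c}} f(\bsx_\setu,\bsx_{\setu^c}) \,\rd\mu_{\setu^c}(\bsx_{\setu^c}) \right)^{\!2} \rd\mu_\setu(\bsx_\setu),
\]
so the whole content of the lemma is to rewrite the squared inner integral as a product of two copies using independent dummy variables and then invoke Fubini.

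Concretely, I would first rename the integration variables in the two copies of the inner integral, writing
\[
\left(\int_{\Omega^{\setu^c}} f(\bsx_\setu,\bsx_{\setu^c})\,\rd\mu_{\setu^c}(\bsx_{\setu^c})\right)\left(\int_{\Omega^{\setu^c}} f(\bsx_\setu,\bsy_{\setu^c})\,\rd\mu_{\setu^c}(\bsy_{\setu^c})\right),
\]
which is legitimate because $\bsx_\setu$ is held fixed in both inner integrals and the two dummies run over the same space $\Omega^{\setu^c}$ independently.

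Next I would apply Fubini's theorem to collect the three integrations (over $\bsx_\setu$, $\bsx_{\setu^c}$ and $\bsy_{\setu^c}$) into a single integral over $\Omega^{2d-|\setu|}$ with respect to the product measure $\rd\mu_\setu(\bsx_\setu)\,\rd\mu_{\setu^c}(\bsx_{\setu^c})\,\rd\mu_{\setu^c}(\bsy_{\setu^c})$, yielding exactly the right-hand side of the lemma. Fubini applies because $f\in\cT^{(d)}=L_2(\Omega^d,\mu)$ and the Cauchy--Schwarz inequality together with the fact that $\mu$ is a probability measure imply that the integrand $(\bsx_\setu,\bsx_{\setu^c},\bsy_{\setu^c})\mapsto f(\bsx_\setu,\bsx_{\setu^c})\,f(\bsx_\setu,\bsy_{\setu^c})$ lies in $L_1$ of the product measure, with integral bounded by $\|f\|_{2,\mu}^2$.

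There is no real obstacle here; the argument is purely bookkeeping with Fubini, and the only point that requires a line of justification is the $L_1$-integrability of the product that legitimizes the swap of integrals. The $\setu=\cD$ case is degenerate (no $\setu^c$-integrations) but trivially reduces to $D_\cD(f)+f_\emptyset^2=\int f^2\,\rd\mu$, which is consistent with the stated formula under the convention that the $\rd\mu_{\setu^c}$ integrals collapse.
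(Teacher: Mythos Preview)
Your argument is correct and is the standard one: the identity preceding the lemma already reduces the claim to rewriting the square of the inner integral as a double integral over two independent copies of $\bsx_{\setu^c}$ and then applying Fubini, with the $L_1$-bound via Cauchy--Schwarz. The paper does not actually supply its own proof of this lemma but merely cites Theorem~2 of \cite{Sobol:2001}; your write-up is precisely the elementary derivation one finds there, so there is nothing to contrast.
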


The next proposition shows that the $\sigma_{\setu,\mu}(f)$ are invariant with respect to component-wise coordinate transformations. 

\begin{proposition}
    Let $\Omega, \widehat{\Omega} \subseteq \R$ and let $\mu = \bigotimes_{j=1}^d \mu_j$ be a product measure on $\Omega^d$.
    Moreover, let $\Phi: \widehat{\Omega}^d \to \Omega^d$ be defined by $\Phi(\bsx) := (\phi_1(x_1),\ldots,\phi_d(x_d))$, 
    where each $\phi_j: \widehat{\Omega} \leftrightarrow \Omega$ is a diffeomorphism.    
    Then it holds
    \begin{equation}
    	\sigma^2_{\setu,\mu}(f) = \sigma^2_{\setu,\mu\circ\Phi} (f \circ \Phi) .
    \end{equation}
\end{proposition}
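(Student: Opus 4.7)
The plan is to show that componentwise diffeomorphisms commute with the ANOVA construction, so that each ANOVA component transforms simply by pullback and the variances are then preserved by the ordinary change-of-variables formula.

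First I would observe that since $\Phi$ is componentwise and $\mu$ is a product measure, the pulled-back measure $\mu\circ\Phi$ is again a product measure on $\widehat{\Omega}^d$ with marginals $\mu_j\circ\phi_j$. For any subset $\setu\subseteq\cD$ write $\Phi_\setu(\hat{\bsx}_\setu) := (\phi_j(\hat{x}_j))_{j\in\setu}$ for the induced componentwise diffeomorphism $\widehat{\Omega}^\setu\to\Omega^\setu$. The one fact needed repeatedly is the Fubini/change-of-variables identity
\[
\int_{\widehat{\Omega}^{\setu^c}}(f\circ\Phi)(\hat{\bsx}_\setu,\hat{\bsy}_{\setu^c})\,\mathrm{d}(\mu_{\setu^c}\circ\Phi_{\setu^c})(\hat{\bsy}_{\setu^c})=\int_{\Omega^{\setu^c}}f(\Phi_\setu(\hat{\bsx}_\setu),\bsy_{\setu^c})\,\mathrm{d}\mu_{\setu^c}(\bsy_{\setu^c}),
\]
which works coordinatewise because each $\phi_j$ is a diffeomorphism and the measure factors. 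Denoting by $P_\setu^{\mu}$ and $P_\setu^{\mu\circ\Phi}$ the projectors of Section 2 with respect to the two measures, this identity says exactly
\[
P_\setu^{\mu\circ\Phi}(f\circ\Phi)=\bigl(P_\setu^{\mu}(f)\bigr)\circ\Phi_\setu.
\]

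Next I would prove by induction on $|\setu|$, using the recursive definition \eqref{eqn:anova_ii}, that the ANOVA components themselves satisfy the same transformation rule
\[
(f\circ\Phi)_\setu=f_\setu\circ\Phi_\setu.
\]
The base case $\setu=\emptyset$ is $P_\emptyset^{\mu\circ\Phi}(f\circ\Phi)=P_\emptyset^\mu(f)=f_\emptyset$, and the inductive step follows by subtracting the lower-order terms, each of which is already known by induction to be the pullback of the corresponding $f_\setv$ under $\Phi_\setv$ (and hence under $\Phi_\setu$, since it depends only on $\bsx_\setv$).

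Finally, the variance identity is immediate: by definition and another application of change of variables on $\widehat{\Omega}^\setu$,
\[
\sigma^2_{\setu,\mu\circ\Phi}(f\circ\Phi)=\int_{\widehat{\Omega}^\setu}\bigl((f\circ\Phi)_\setu\bigr)^2\,\mathrm{d}(\mu_\setu\circ\Phi_\setu)=\int_{\widehat{\Omega}^\setu}(f_\setu\circ\Phi_\setu)^2\,\mathrm{d}(\mu_\setu\circ\Phi_\setu)=\int_{\Omega^\setu}f_\setu^{\,2}\,\mathrm{d}\mu_\setu=\sigma^2_{\setu,\mu}(f).
\]

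The only subtle point is the very first step: verifying that the projector commutes with the coordinate change. The argument hinges on both $\Phi$ acting componentwise \emph{and} $\mu$ being a product measure; without either assumption the projector $P_\setu$, which averages over $\setu^c$ only, would in general couple the two. Once that lemma is in hand, the rest is just bookkeeping through the recursion, so I expect the commutation of $P_\setu$ with $\Phi$ to be the main—though straightforward—obstacle.
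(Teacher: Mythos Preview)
Your argument is correct and is precisely a fleshed-out version of the paper's one-line proof, which simply appeals to the change of variables formula. You have made explicit the two places where change of variables is used---once to show $P_\setu^{\mu\circ\Phi}(f\circ\Phi)=(P_\setu^\mu f)\circ\Phi_\setu$ and once for the final variance integral---and correctly identified that the product structure of both $\Phi$ and $\mu$ is what makes this work.
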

\begin{proof}
 This is a direct consequence of the change of variables formula.
\end{proof}

This result is of special interest if the component functions $\phi_j$ of the transformation are the inverses of the cumulative distribution functions of the $\mu_j$.

Finally, in order to compare the dependence of functions on their respective ANOVA terms, we define the so-called sensitivity coefficients
 \[
  s_\setu(f) := \frac{1}{\sigma_\mu^2(f)} \sigma^2_{\setu,\mu}(f) ~~ \mbox{ for all } \ \setu \subseteq \cD.
 \]
 These coefficients describe the relative importance of the coordinate directions in $\setu$.
 Note that $\sum_{\setu \subseteq \cD} s_\setu = 1$.

\subsection{Notions of effective dimensionality.}
The term \emph{effective dimensionality} is based on the insight that the ANOVA terms of higher cardinality contribute 
much less to the total variance than the lower-order terms for many application-driven problems and that methods for 
their solution can benefit from this property. 
In \cite{Owen:2003} the \emph{mean dimension} is defined by weighting the variances $\sigma^2_\setu$ of the single ANOVA terms
$f_\setu$ with their cardinalities $|\setu|$, i.e.\ higher-order terms get penalized stronger than lower-order terms.
Notions of effective dimensionality which are not based on the classical ANOVA-decomposition, but rather on the 
anchored ANOVA approach can be found in \cite{Griebel.Holtz:2008}. 

In this paper we will employ a {\em generalization} of the mean dimension. To this end, 
let $\nu_\setu > 0$ be an arbitrary set of weights for all $\setu \subseteq \cD$. We define
\begin{equation} \label{eqn_meandim_generalization}
 d_{{\boldsymbol{\bsnu}}} (f) := \sum_{\setu \subseteq \cD} \nu_\setu \,  s_\setu(f) .
\end{equation}
Thus, sensitivity coefficients in the ANOVA decomposition can be weighted differently, e.g.\ according to the number of variables present in each ANOVA term.

\section{Minimizing the effective dimensionality}
In the following, we aim to
find a coordinate transformation $\psi: \Omega^d \rightarrow \R^d$ to reduce the generalized mean dimension \eqref{eqn_meandim_generalization} 
for a given $f: \R^d \rightarrow \R$. This directly leads to the minimization problem
\begin{equation}  \label{eqn_mini_func}
	\mathfrak{M}_f(\psi) := \sum_{\setu \subseteq \cD} \nu_\setu \, s_\setu(f \circ \psi) 
	\longrightarrow \min_{\psi \in \Psi}!,
\end{equation}
where $\nu_\setu > 0$ are prescribed weights that should penalize higher-order terms and $\Psi$ is a class of suitable diffeomorphisms. The hope is that there exists a 
$\psi$ such that $f \circ \psi$ has a substantially smaller dependence on higher-order ANOVA terms than the original $f$ did.
At this point we should realize that the approximation of $\psi$ also involves certain costs. 
Therefore, we need the class $\Psi$ to be both powerful enough to reduce the effective dimension and small enough to rely only on a few degrees of freedom 
so that we do not just shift the costs from the approximation of the outer function $f \circ \psi$ to the inner function $\psi$.

\subsection{Equivalent maximization problem.}
Penalizing higher-order ANOVA terms makes the functional \eqref{eqn_mini_func} expensive or even impossible to evaluate as these terms contribute the most to its value 
and their evaluation is based on the evaluation of all lower-order terms for $\setv \subsetneq \setu$. Therefore,
we are looking for a reformulation of the minimization task \eqref{eqn_mini_func} which circumvents this problem. 
To this end, note that \begin{align*}
	\MM_f(\psi) & = \sum_{\setu \subseteq \cD} \nu_\setu \, s_\setu(f \circ \psi)  \\
		& = \sum_{|\setu|<d} \nu_\setu \, s_\setu(f\circ \psi) + \nu_\cD \,  s_\cD(f\circ \psi)  \\
		& =  \sum_{|\setu|<d} \left( \nu_\setu - \nu_\cD \right)  \,  s_\setu(f\circ \psi) + \nu_\cD .
\end{align*}
Therefore, a minimizer of $\MM_f(\psi)$ is also a maximizer of $- \frac{1}{\nu_\cD} \MM_f(\psi)$ and vice versa and
 we obtain the following equivalent maximization problem
\begin{equation} \label{eq:maxFunc}
 \hat{\MM}_{f}(\psi) := \sum_{\setu \subsetneq \cD} \left( 1 - \frac{\nu_\setu}{\nu_\cD} \right) \, s_\setu(f \circ \psi) \longrightarrow \max_{\psi \in \Psi}!.
\end{equation}
The main advantage of considering
\eqref{eq:maxFunc} instead of \eqref{eqn_mini_func} is that we can now omit sets $\setu$ with large $| \setu |$ in \eqref{eq:maxFunc} and 
focus the optimization task to sets with small $| \setu |$, i.e. lower-dimensional terms only.
%

\subsection{Choice of the weights.}
Let $1 \leq k < d$. In the remainder of the article, we will use
\begin{equation*} \label{eq:weights}
 \nu_{\setu} := \left\{ \begin{array}{cl} 1 - \exp\left(- \max \{ j \in \setu \} \right) & \text{ if } \setu \subseteq \{1,\ldots,k\}, \\ 1 & \text{ else.} \end{array} \right.
\end{equation*}
Now we only need to evaluate ANOVA terms of $f \circ \psi$ corresponding to subsets of the first $k$ variables since
$$
\hat{\MM}_{f}(\psi) = 
\sum_{\setu \subseteq \{1,\ldots,k\}} 
\exp\left(- \max \{ j \in \setu \} \right) s_{\setu}(f \circ \psi).
$$
In this sense, we try to find a $\psi$ such that ideally all (or at least most)
of the variance of $f \circ \psi$ resides in these terms. Such a function is said to have truncation dimension $k$ in the ANOVA sense.

For the subclass of orthogonal projections $\Psi$, which we will focus on in this paper, and for a measure $\mu$ which is invariant 
under orthogonal transformations, such as the Lebesgue or the Gaussian measure on $\R^d$, we obtain 
$\sigma^2_{\mu}(f) = \sigma^2_{\mu \circ \psi}(f \circ \psi)$ for all $\psi \in \Psi$. Therefore, we can simply omit $\sigma^2_{\mu}(f \circ \psi)$ 
in the maximization functional in this case and we just maximize
\begin{equation} \label{eq:finalMaxProb}
 \hat{\MM}_{f}(\psi) := 
\sum_{\setu \subseteq \{1,\ldots,k\}} \exp\left(- \max \{ j \in \setu \} \right) \sigma^2_{\setu, \mu}(f \circ \psi),
\end{equation}
which we can evaluate by using \eqref{eq:varViaSobol} and Lemma \ref{lemma:Du}.

\subsection{Orthogonal transformations.}

Due to our specific choice of weights, we can actually 
restrict ourselves to transformations $\phi: \Omega^k \subseteq \R^k \to \R^d$ instead of having to look for maps with a domain in $\R^d$.
Therefore, let 
$$
V_k(\R^d) := \left\{ \bsQ \in \R^{d \times k} \mid \bsQ^T \bsQ = \bsI \right\},
$$
be our class of valid transformations. Here, the rows of $\bsQ \in V_k(\R^d)$ represent an orthogonal $k$-frame in $\R^d$.
This class is actually a submanifold of $\R^{d \times k}$ and it is known as the so-called \emph{Stiefel manifold}.

As we see, maximizing $\hat{\MM}_{f}(\bsQ)$ over $\bsQ \in V_k(\R^d)$ is a highly nonlinear task with possibly nonunique maximizers. 
The existence of maximizers can be guaranteed for continuous functions $f$ since $\hat{\MM}_{f}$ is a continuous functional in that case and 
$V_k(\R^d) \subset \R^{d \times k}$ is compact.
As mentioned earlier, we will substitute $f$ by a polynomial surrogate for the actual optimization for which we can, therefore,
guarantee the existence of maximizers.

\subsection{Polynomials as invariant basis.} \label{sec:poly}
The largest part of the costs in evaluating $\hat{\MM}_f(\bsQ)$ is the evaluation of each $\sigma^2_{\setu,\mu}(f \circ \bsQ)$, 
which requires the approximation of high-dimensional integrals. Therefore, 
we discretize $f$ in a basis which allows to compute $\sigma^2_{\setu,\mu}(f \circ \bsQ)$ analytically for $\Omega = \R$ and which is closed under orthogonal transformations.
%
%
To this end, we employ a total degree polynomial space with a homogeneous basis in $\R^k$, i.e.\ we take the basis set
\begin{equation*} \label{eqn_polynomial_basis}
  \cB^{(k)}_m := 
  \{ \bsx^\bsaa = x_1^{\alpha_1} \ldots x_k^{\alpha_k} \mid |\bsaa|_1 \leq m \}
\end{equation*}
for some $m \in \N$.
Note that $K:= | \cB^{(k)}_m | = \binom{k+m}{k}$.



\begin{lemma} \label{lemma:drehinvarianz}
The basis $\cB^{(d)}_m$
spans the total degree polynomial space $\cP^{(d)}_m$ on $\R^d$ which is invariant with respect to all orthogonal transformations $\bsQ \in V_k(\R^d)$, i.e.
\[
	\phi \circ \bsQ \in  \mathrm{span}\{ \cB^{(k)}_m \} ~~ \forall \ \phi \in \cP^{(d)}_m, \bsQ \in V_k(\R^d).
\]
\begin{proof}
Let $\omega_{\bsaa}(\bsx) := \bsx^\bsaa$. Using the multinomial theorem with $\beta \in \N_0^k$, we obtain
	\begin{align*}
		\omega_{\bsaa} \circ \bsQ(\bsx)  &= \prod_{i=1}^d \left( \sum_{j=1}^k Q_{ij} x_j \right)^{\alpha_i} = \ \prod_{i=1}^d \underbrace{\sum_{|\bsbb|_1=\alpha_i} \frac{\alpha_i!}{\beta_1! \cdots \beta_k!} \prod_{j=1}^k \left( Q_{ij} x_j \right)^{\beta_j}}_{\in \mathrm{span}\{ \cB^{(k)}_{\alpha_i}\}}.
	\end{align*}
	Since $\mbox{deg}(P \cdot S) = \mbox{deg}(P) + \mbox{deg}(S)$ holds for polynomials $P$ and $S$ and because of $| \bsaa|_1 \leq m$, the claim follows.
\end{proof}
\end{lemma}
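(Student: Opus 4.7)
The plan is to reduce the claim to basis monomials and then perform a direct multinomial expansion. Since composition with a fixed $\bsQ$ is linear in $\phi$ and $\cP^{(d)}_m = \mathrm{span}\{\cB^{(d)}_m\}$ by definition, it suffices to prove that $\omega_\bsaa \circ \bsQ \in \mathrm{span}\{\cB^{(k)}_m\}$ for every monomial $\omega_\bsaa(\bsx) := \bsx^\bsaa$ with $|\bsaa|_1 \leq m$; the general case then follows by taking finite linear combinations.

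The key step is to write $\omega_\bsaa \circ \bsQ$ explicitly in the new variables. For $\bsy \in \R^k$, the $i$-th coordinate of $\bsQ\bsy \in \R^d$ is $\sum_{j=1}^k Q_{ij} y_j$, so
\begin{equation*}
\omega_\bsaa \circ \bsQ (\bsy) = \prod_{i=1}^d \left( \sum_{j=1}^k Q_{ij} y_j \right)^{\alpha_i}.
\end{equation*}
Next, I would apply the multinomial theorem to each factor: $\bigl(\sum_{j=1}^k Q_{ij} y_j\bigr)^{\alpha_i}$ expands into a linear combination of monomials $y_1^{\beta_1} \cdots y_k^{\beta_k}$ with $|\bsbb|_1 = \alpha_i$, which places it in $\mathrm{span}\{\cB^{(k)}_{\alpha_i}\}$.

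Finally, the product of polynomials of total degrees $\alpha_1,\ldots,\alpha_d$ has total degree at most $\sum_{i=1}^d \alpha_i = |\bsaa|_1 \leq m$, so $\omega_\bsaa \circ \bsQ$ lies in $\mathrm{span}\{\cB^{(k)}_m\}$, as desired. I do not anticipate a substantive obstacle here: the argument is purely symbolic and the only nontrivial ingredient is the multinomial expansion. In fact the orthogonality condition $\bsQ^T\bsQ = \bsI$ is not used at this stage — any linear map $\R^k \to \R^d$ would preserve the total-degree bound. The restriction to the Stiefel manifold becomes essential only later, when one needs $\bsQ$ to leave rotation-invariant measures (such as the Gaussian) invariant so that the variance structure behind $\hat{\MM}_f$ is preserved under the coordinate change.
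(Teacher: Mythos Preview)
Your proposal is correct and follows essentially the same route as the paper: reduce to monomials, expand each factor $\bigl(\sum_j Q_{ij} y_j\bigr)^{\alpha_i}$ via the multinomial theorem, and use additivity of total degree to land in $\mathrm{span}\{\cB^{(k)}_m\}$. Your added remarks (the explicit linearity reduction and the observation that orthogonality of $\bsQ$ is not actually used in this lemma) are accurate and do not deviate from the paper's argument.
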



Lemma \ref{lemma:drehinvarianz} shows that we just need to evaluate $D_\setu(p)$ for polynomials $\tilde{p} \in \cB^{(k)}_m$ 
regardless of the transformation $Q \in V_k(\R^d)$ when taking a polynomial surrogate $p \in \mathrm{span}(\cB^{(d)}_m)$ of $f$.
Next, let us define $\cI_M(\bsaa) := \int_{\Omega^k} \bsx^\bsaa \, \mathrm{d}\mu(\bsx)$ with the restriction $\cI_M(\bsaa_\setu) := \int_{\Omega^{|\setu|}} 
\bsx_\setu^{\bsaa_{\setu}} \, \mathrm{d}\mu_{\setu}(\bsx_\setu)$ to the directions that are contained in $\setu$. Then, we have the following result.

\begin{lemma}  \label{lemma:DU_ana}
Let $p := \sum_{\substack{|\bsaa|_1 \leq m
}} C_{\bsaa} \bsx^{\bsaa} \in \cB^{(k)}_m$. Then, $\cA := D_\setu(p) + f_\emptyset^2$ from \eqref{eq:varViaSobol} fulfills
\begin{equation*}
	\cA = \sum_{\substack{|\bsaa|_1 \leq m \\ |\bsbb|_1 \leq m}} 
C_\bsaa C_\bsbb \ \cI_M(\bsaa_\setu + \bsbb_\setu) \cdot  \cI_M(\bsaa_{\setu^c})  \cdot  \cI_M(\bsbb_{\setu^c}) .
\end{equation*}
\begin{proof}
Using Lemma \ref{lemma:Du} we obtain
\begin{align*}
	&\cA = \int\limits_{\Omega^{2k-|\setu|}}  p(\bsx_\setu, \bsx_{\setu^c})\, p(\bsx_\setu, \bsy_{\setu^c}) \mathrm{d} \mu(\bsx) \mathrm{d} \mu_{\setu^c}(\bsy_{\setu^c}) \\
	&= \sum_{\substack{|\bsaa|_1 \leq m \\ |\bsbb|_1 \leq m }} C_\bsaa C_\bsbb \hspace{-0.3cm} \int\limits_{\Omega^{2k-|\setu|}}
	\hspace{-0.3cm} \bsx_{\setu}^{\bsaa_\setu + \bsbb_\setu} \, \bsx_{\setu^c}^{\bsaa_{\setu^c}} \, \bsy_{\setu^c}^{\bsbb_{\setu^c}} \mathrm{d} \mu(\bsx)
	\mathrm{d} \mu_{\setu^c}(\bsy_{\setu^c})
\end{align*}
and applying Fubini's theorem finishes the proof.
\end{proof}
\end{lemma}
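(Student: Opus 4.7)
The plan is to apply Lemma \ref{lemma:Du} directly to $p$, expand $p$ using the given representation, and then factor the resulting integral via Fubini's theorem.

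First, I would write $\cA = D_\setu(p) + f_\emptyset^2$ using Lemma \ref{lemma:Du}, obtaining a triple integral with respect to $\mathrm{d}\mu_\setu(\bsx_\setu)$, $\mathrm{d}\mu_{\setu^c}(\bsx_{\setu^c})$, and $\mathrm{d}\mu_{\setu^c}(\bsy_{\setu^c})$, where the integrand is the product $p(\bsx_\setu,\bsx_{\setu^c})\,p(\bsx_\setu,\bsy_{\setu^c})$. Next, I would substitute the explicit expansion $p = \sum_{|\bsaa|_1 \leq m} C_\bsaa \bsx^\bsaa$ into both factors (using independent summation indices $\bsaa$ and $\bsbb$), and pull the double sum outside the integral by linearity. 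This yields $\cA = \sum_{\bsaa,\bsbb} C_\bsaa C_\bsbb \int \cdots$, where the integrand is a product of three monomials.

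The key step is to split each exponent vector $\bsaa$ into its $\setu$- and $\setu^c$-components, so that $\bsx^\bsaa = \bsx_\setu^{\bsaa_\setu}\bsx_{\setu^c}^{\bsaa_{\setu^c}}$. Doing this for both occurrences of $p$, the integrand factorizes as
\[
\bsx_\setu^{\bsaa_\setu}\bsx_{\setu^c}^{\bsaa_{\setu^c}} \cdot \bsx_\setu^{\bsbb_\setu}\bsy_{\setu^c}^{\bsbb_{\setu^c}} \;=\; \bsx_\setu^{\bsaa_\setu+\bsbb_\setu}\,\bsx_{\setu^c}^{\bsaa_{\setu^c}}\,\bsy_{\setu^c}^{\bsbb_{\setu^c}},
\]
since the two factors share the argument $\bsx_\setu$ but are evaluated at independent $\setu^c$-arguments $\bsx_{\setu^c}$ and $\bsy_{\setu^c}$. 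Because the measure $\mu$ is a product measure, Fubini's theorem applies and the triple integral decouples into three one-piece integrals, one in each group of variables, producing exactly $\cI_M(\bsaa_\setu+\bsbb_\setu) \cdot \cI_M(\bsaa_{\setu^c}) \cdot \cI_M(\bsbb_{\setu^c})$ by the definition of $\cI_M$.

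I do not expect any real obstacle here: the statement is a direct computation once Lemma \ref{lemma:Du} is invoked, and the only thing to watch carefully is the bookkeeping of the multi-index partitions and ensuring that the exponents $\bsaa_\setu$ and $\bsbb_\setu$ genuinely add because both polynomial factors are evaluated at the same $\bsx_\setu$. The product measure structure of $\mu$ is what allows the Fubini step to produce three independent one-factor integrals of the form $\cI_M$, completing the identification.
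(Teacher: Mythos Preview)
Your proposal is correct and follows essentially the same approach as the paper: invoke Lemma~\ref{lemma:Du}, expand $p$ as a double sum with independent multi-indices, split each multi-index into its $\setu$- and $\setu^c$-components, and apply Fubini to factor the product measure integral into the three $\cI_M$ terms.
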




In the case $\Omega = \R$ and when $\mu$ is the standard Gaussian measure 
we can compute $\cI_M(\bsaa)$ analytically.

\begin{lemma} \label{lemma:monomGauss}
The expected value of $\bsx^\bsaa$ with $|\bsaa|_1 = m$ for $\bsaa \in \N_0^k$ with respect to the Gaussian measure $\mu$ is
\begin{equation*} \label{eqn:monom_ew}
\E_\mu\left[\bsx^\bsaa \right] =
	  \begin{cases}
	  	\prod_{i=1}^k (\alpha_i-1)!! & \mbox{if all } \alpha_i \mbox{ even,}\\
	  	0 & \mbox{else,}
	  \end{cases}
\end{equation*}
where $n!! := n \cdot (n-2) \cdot (n-4) \cdot \ldots \cdot 1$. 
\begin{proof}
We have
$
  \E_\mu\left[\bsx^\bsaa \right] 
= \prod_{i=1}^k  \mathbb{E}_{\mu_i}(x^{\alpha_i})
$
and since the moments of the standard normal distribution are
\[
	\mathbb{E}(x^q) = \begin{cases}
	                  	(q-1)!! & q \mbox{ even}, \\
	                  	0 & q \mbox{ odd},
	                  \end{cases}
\]
the claim follows.
\end{proof}
\end{lemma}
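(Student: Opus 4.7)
The plan is to exploit the product structure of the standard Gaussian measure on $\R^k$, which reduces the multivariate moment to a product of univariate moments. Since $\mu = \bigotimes_{i=1}^k \mu_i$ and $\bsx^\bsaa = \prod_{i=1}^k x_i^{\alpha_i}$, Fubini's theorem (or equivalently independence of the coordinates under $\mu$) immediately gives
\begin{equation*}
\E_\mu[\bsx^\bsaa] \;=\; \prod_{i=1}^k \E_{\mu_i}[x_i^{\alpha_i}].
\end{equation*}
Thus the multivariate claim reduces to the univariate assertion $\E_{\mu_i}[x^q] = (q-1)!!$ for even $q$ and $0$ for odd $q$.

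For the univariate moments I would proceed in two steps. First, for odd $q$ the integrand $x^q e^{-x^2/2}$ is an odd function on $\R$, so its integral vanishes and $\E_{\mu_i}[x^q]=0$. Second, for even $q$ I would derive a recursion by integration by parts: writing
\begin{equation*}
\E_{\mu_i}[x^q] \;=\; \frac{1}{\sqrt{2\pi}} \int_\R x^{q-1} \cdot x\, e^{-x^2/2}\,\rd x
\end{equation*}
and integrating by parts with $u = x^{q-1}$, $\rd v = x e^{-x^2/2}\rd x$ yields $\E_{\mu_i}[x^q] = (q-1)\,\E_{\mu_i}[x^{q-2}]$. Together with the base case $\E_{\mu_i}[x^0] = 1$, induction gives $\E_{\mu_i}[x^q] = (q-1)(q-3)\cdots 1 = (q-1)!!$.

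Combining both steps, the multivariate product $\prod_i \E_{\mu_i}[x_i^{\alpha_i}]$ is zero as soon as any single $\alpha_i$ is odd, and equals $\prod_{i=1}^k (\alpha_i-1)!!$ when every $\alpha_i$ is even. This is exactly the stated formula.

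There is no real obstacle here: the argument is a textbook computation, and the hypothesis $|\bsaa|_1 = m$ plays no role beyond fixing the total degree context in which the lemma is applied. The only place where care is needed is being consistent with the convention $(-1)!! = 1$ (relevant if some $\alpha_i = 0$), so that the formula correctly reduces to $\E_{\mu_i}[x^0]=1$ for those coordinates; under this convention the product form is valid uniformly over all even multi-indices.
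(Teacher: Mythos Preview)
Your proof is correct and follows exactly the paper's approach: factor the multivariate moment into a product of univariate moments via the product structure of $\mu$, then invoke the standard normal moment formula. The only difference is that you actually derive the univariate moments by an odd-symmetry argument and an integration-by-parts recursion, whereas the paper simply quotes them; your remark on the convention $(-1)!!=1$ is a useful addition.
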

With the help of Lemmata \ref{lemma:drehinvarianz}, \ref{lemma:DU_ana} and \ref{lemma:monomGauss},
we can compute $f_\emptyset$ and solve all the integrals involved in the computation of $\sigma^2_{\setu,\mu}(p \circ \bsQ)$ for all
$\setu \subseteq \cD$, $p \in \cP^{(d)}_m$ and $\bsQ \in V_k(\R^d)$
analytically if $\mu$ is taken as the standard Gaussian measure on the whole space $\R^k$. Therefore, we know how to evaluate $\hat{\MM}_p(\bsQ)$ 
in this case.


\subsection{A manifold CG algorithm on $V_k(\R^d)$.} \label{sec:CG}

To numerically solve the optimization problem, we propose a descent algorithm on the submanifold $V_k(\R^d) \subset \R^{d \times k}$.
Algorithm \ref{alg:manifoldCG} briefly sketches a 
conjugate gradient approach for this specific matrix manifold. Before we start the first iteration, we set the solution to the zero vector and 
choose a random direction.
For more details on the algorithm and the theory behind optimization on matrix manifolds, see 
\cite{AbsMahSep:2008}.

\begin{algorithm}
\caption{One step of the matrix manifold CG algorithm on $V_k(\R^d)$ to determine a maximizer of $\hat{\MM}_p$.}
\begin{algorithmic}
 \STATE \textbf{Input:} Iterate $\bsQ \in V_k(\R^d)$, direction $\bsM \in \R^{d \times k}$.
 \STATE \textbf{Output:} New iterate $\bar{\bsQ} \in V_k(\R^d)$, new direction $\bar{\bsM} \in \R^{d \times k}$.
 \vspace{0.15cm}
 \STATE Line search for $\delta > 0$ along $\bsQ + \delta \bsM$.
 \STATE QR decomposition: $QR \gets \bsQ + \delta \bsM$ (``\emph{retraction}'').
 \STATE Define new iterate: $\bar{\bsQ} \gets Q$.
\STATE Polak-Ribiere: $\beta \gets \frac{\nabla \hat{\MM}_p(\bar{\bsQ})^T \left( \nabla \hat{\MM}_p(\bar{\bsQ}) - \nabla \hat{\MM}_p(\bsQ) \right)}{ \nabla \hat{\MM}_p(\bsQ)^T  \nabla \hat{\MM}_p(\bsQ)}$.
 \STATE $\bsM^* \gets (\bsI - \bar{\bsQ}\bar{\bsQ}^T)\bsM + \frac{1}{2}\bar{\bsQ}(\bar{\bsQ}^T\bsM - \bsM^T\bar{\bsQ})$ (``\emph{parallel transport}'').
 \STATE Define new direction: $\bar{\bsM} \gets - \nabla \hat{\MM}_p(\bar{\bsQ}) + \beta \bsM^*$.
\end{algorithmic}
 \label{alg:manifoldCG}
\end{algorithm}

As we see, we need matrix-matrix multiplications, a QR decomposition of a $d \times k$ matrix and 
a gradient computation of $\hat{\MM}_p$, which has to be understood as calculating 
the usual gradient in $\R^{d \times k}$. We do the latter by using first order forward finite differences.
\begin{proposition} \label{prop:costs}
 Let $\mu = \cN(\mathbf{0},\bsI)$ be the standard Gaussian measure. The total number of operations to perform one CG iteration of Algorithm \ref{alg:manifoldCG} is bounded by
 \begin{align*} & \ {\cal O}\left(kd \cdot \left(d \binom{d+m}{d} + \left( k \binom{k+m}{k}
 \right)^2 \right) \right) \\ = & \ {\cal O}\left(k d^{m+2} + k^{2m+3} d \right).
\end{align*}
 \begin{proof}
Since computing the QR decomposition of a $d \times k$ matrix costs ${\cal O}\left(k^2 \cdot d \right)$ operations and the costs for computing the matrix products in the parallel transport step scale like
${\cal O}\left( k^2 \cdot d + k \cdot d^2\right) = {\cal O}\left( k \cdot d^2 \right)$, we directly see that the most expensive part
is computing the derivative
$\nabla \hat{\MM}_p(\bar{\bsQ})$ by finite differences, for which we need to perform $2kd$ evaluations of $\hat{\MM}_p$. 
To this end, let $\bsQ \in V_k(\R^d)$ be given and assume that we want to compute  $\hat{\MM}_p(\bsQ)$. We first need to calculate the coefficients of $p \circ \bsQ$ in the basis $\cB^{(k)}_m$
as done in the proof of Lemma \ref{lemma:drehinvarianz}.
Here, we store the intermediate values $s_i := \sum_{j=1}^k Q_{ij} x_j$ for all $i=1,\ldots,d$ and subsequently compute
$
\sum_{|\bsaa|_1 \leq m} c_{\bsaa} \prod_{i=1}^d s_i^{\alpha_i}
$
for the coefficients $c_{\bsaa}$ of $p$. This costs ${\cal O}(d \cdot \binom{d+m}{d})$ operations since there are $\binom{d + m}{d}$ monomials. 
Next, we evaluate $\hat{\MM}_p$ at $\bsQ$. To this end, note that the computation of $D_\setu(p \circ \bsQ)$ - with given coefficients for $p \circ \bsQ$ 
- takes ${\cal O}(\binom{k + m}{k} \cdot \binom{k + m}{k} \cdot k)$ operations as proven in Lemma \ref{lemma:DU_ana} and Lemma \ref{lemma:monomGauss}. 
It remains
to show that we can evaluate $\hat{\MM}_p(\bsQ)$ by using only ${\cal O}(k)$ different sets $\setu \subset \{ 1,\ldots,k\}$ and their corresponding $D_\setu(p \circ \bsQ)$. 
Indeed, let $\hat{\nu}_{\setu} := \exp\left(- \max \{ j \in \setu \} \right)$ and let $[i] := \{1,\ldots,i\}$. According to \eqref{eq:finalMaxProb}, we actually need to compute 
\begin{align*} 
 \hat{\MM}_p(\bsQ) & = \sum_{\setu \subseteq [k]} \hat{\nu}_{\setu} \sigma^2_{\setu, \mu}(f \circ \psi) \\
 & = \sum_{\setu \subseteq [k-1]}  \hat{\nu}_{\setu} \sigma^2_{\setu, \mu}(f \circ \psi) + \exp\left(-k\right) \cdot \left(D_{[k]}(p \circ \bsQ) -  D_{[k-1]}(p \circ \bsQ)  \right) \\
 & = \ldots \\
 & = \sum_{i=1}^k \exp\left(-i\right) \cdot \left(D_{[i]}(p \circ \bsQ) -  D_{[i-1]}(p \circ \bsQ) \right),
\end{align*}
where we set $D_{\emptyset} := 0$. This can be done with ${\cal O}(k)$ evaluations of different $D_{\setu}$, which completes the proof.
\end{proof}
\end{proposition}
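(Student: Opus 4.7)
The plan is to walk through each line of Algorithm \ref{alg:manifoldCG} and show that the total cost is dominated by the finite-difference evaluation of $\nabla \hat{\MM}_p$. The line-search, QR factorization, and parallel-transport steps involve only standard linear algebra on $d \times k$ matrices: the QR factorization costs ${\cal O}(k^2 d)$, the matrix products $(\bsI - \bar{\bsQ}\bar{\bsQ}^T)\bsM$ and $\bar{\bsQ}^T \bsM - \bsM^T \bar{\bsQ}$ cost ${\cal O}(k^2 d + k d^2) = {\cal O}(k d^2)$, and the Polak--Ribiere coefficient is a single matrix inner product. Since the gradient is approximated by first-order forward finite differences entrywise on the $d \times k$ iterate, it requires $2kd$ evaluations of $\hat{\MM}_p$, which will dominate all the above costs provided we can bound the per-evaluation cost adequately.

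Next I would estimate the cost of a single evaluation of $\hat{\MM}_p(\bsQ)$, which splits into two subtasks. First, convert the polynomial $p \circ \bsQ$ into the basis $\cB^{(k)}_m$. Following the expansion used in the proof of Lemma \ref{lemma:drehinvarianz}, one precomputes the intermediate forms $s_i = \sum_{j=1}^k Q_{ij} x_j$ for $i = 1, \ldots, d$ and assembles $p \circ \bsQ = \sum_{|\bsaa|_1 \leq m} c_\bsaa \prod_{i=1}^d s_i^{\alpha_i}$, at a cost of ${\cal O}(d \binom{d+m}{d})$ since $p$ has $\binom{d+m}{d}$ monomials. Second, once the $\binom{k+m}{k}$ coefficients of $p \circ \bsQ$ are available, each value $D_\setu(p \circ \bsQ)$ can be computed in ${\cal O}(k \binom{k+m}{k}^2)$ operations via Lemma \ref{lemma:DU_ana} combined with Lemma \ref{lemma:monomGauss}, since every summand is a product of three Gaussian-moment integrals available in closed form.

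The crucial and genuinely interesting step is to argue that only $k$ such sets $\setu$ actually enter the expression for $\hat{\MM}_p(\bsQ)$, rather than all $2^k$ subsets of $\{1, \ldots, k\}$. Here I exploit that the weights $\hat{\nu}_\setu = \exp(-\max\{j \in \setu\})$ depend only on $\max(\setu)$. Grouping together all subsets with a common maximum $i$, their variances telescope: by the definition of $D_\setu$ together with \eqref{eq:varViaSobol}, $\sum_{\setu: \max \setu = i} \sigma^2_{\setu,\mu}(p \circ \bsQ) = D_{[i]}(p \circ \bsQ) - D_{[i-1]}(p \circ \bsQ)$, where $[i] := \{1,\ldots,i\}$ and $D_\emptyset := 0$. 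Consequently $\hat{\MM}_p(\bsQ) = \sum_{i=1}^k e^{-i}\bigl(D_{[i]}(p \circ \bsQ) - D_{[i-1]}(p \circ \bsQ)\bigr)$, so only the $k$ cumulative sets $[i]$ need to be processed.

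Putting the pieces together, a single evaluation of $\hat{\MM}_p$ costs ${\cal O}\bigl(d \binom{d+m}{d} + k \cdot k \binom{k+m}{k}^2\bigr)$; multiplying by the $2kd$ finite-difference evaluations yields the first displayed bound, and the simplifications $\binom{d+m}{d} = {\cal O}(d^m)$ and $\binom{k+m}{k} = {\cal O}(k^m)$ produce the second. The main obstacle in writing this up is precisely the telescoping reduction from $2^k$ to $k$ index sets; without it, $\hat{\MM}_p$ would be infeasible to evaluate for even moderate $k$, and the complexity bound would be exponential in $k$. Everything else is careful bookkeeping of the linear-algebra primitives invoked by Algorithm \ref{alg:manifoldCG}.
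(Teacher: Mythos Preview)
Your proposal is correct and follows essentially the same route as the paper's own proof: you bound the linear-algebra primitives identically, isolate the $2kd$ finite-difference evaluations of $\hat{\MM}_p$ as the dominant cost, compute the coefficients of $p\circ\bsQ$ at cost ${\cal O}(d\binom{d+m}{d})$, evaluate each $D_\setu$ at cost ${\cal O}(k\binom{k+m}{k}^2)$, and then use exactly the same telescoping identity $\sum_{\setu:\max\setu=i}\sigma^2_{\setu,\mu}=D_{[i]}-D_{[i-1]}$ to reduce from $2^k$ subsets to the $k$ prefix sets $[i]$. The paper presents this telescoping as a recursive peel-off of the top index rather than a grouping by $\max(\setu)$, but the computation is identical.
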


For small $m$, algorithm \ref{alg:manifoldCG} is applicable in moderate dimensions ($d \lesssim 50$), which could be improved by using smaller polynomial spaces, e.g.\ based on hyperbolic crosses.
Finally note that a similar approach with a Newton-type 
optimizer has been introduced in \cite{ConstantineHokanson}. 
There 
the surrogate $p$ for $f$ and the transformation $\bsQ \in V_k(\R^d)$ are optimized at the same time.
In our case, the polynomial just serves to coarsely represent 
the ANOVA structure of $f$. Since we want to obtain a cost-efficient representation of $f \circ \bsQ \approx p \circ \bsQ$ before actually solving the underlying transformed
least-squares problem on a sparse grid, we take the generalized mean dimension $\hat{\MM}_f$ as a complexity indicator to optimize with respect to $\bsQ$. 
Subsequently, we solve a least squares problem in a sparse grid space as described in the next section.

\section{Application to regression with sparse grids}

In this section, we briefly recapitulate least squares regression on sparse grids, see e.g.\ \cite{Garcke.Griebel.Thess:2000, pfluegerpeher}, and 
discuss a space- and dimension-adaptive variant based on the concepts of \cite{Feuersaenger:2010, pfluegerpeher}. 
Then, to reduce the computational costs of the adaptive regression algorithm, we suggest a preprocessing step based on the preceding section.

\subsection{Multivariate regression on sparse grids}
Let $\bsz := {\{ (\bst_i, x_i) \in T \times \R \mid i=1,\ldots,N \}}$ be $N$ given samples, where $T := [0,1]^d$. If a different domain is used, the data has to be rescaled appropriately.
Generally, least squares regression determines a minimizer to
\begin{equation} \label{sampleE}
 \min_{f \in S} {\cE}_{\bsz} ~~ \text{ with } ~~ {\cE}_{\bsz}(f) := \frac{1}{N} \sum_{i=1}^N (x_i - f(\bst_i))^2
\end{equation}
over some set of functions $S$.
If $S$ is a reproducing kernel Hilbert space the famous representer theorem states that the solution can be determined by solving a (usually dense) $N \times N$ system of 
linear equations \cite{Smola}. Naively, this would need ${\cO}(N^3)$ floating point operations.
While some algorithms solve these kernel systems approximately, they still scale worse than linearly in $N$ in general.
Therefore, we will employ a space $S$ of sparse grid functions instead, which naturally leads to an algorithm that scales linearly in $N$ and circumvents the curse of dimensionality 
of standard tensor grid approaches \cite{Bungartz.Griebel:2004}. Several variants of such sparse grid least squares algorithms 
have been very successfully employed for different regression tasks, see e.g.\ \cite{Garcke.Griebel.Thess:2000, pfluegerpeher}. 

We shortly recall the sparse grid discretization based on the modified linear basis from \cite{pfluegerpeher}.
Let
\begin{equation*}
  \Phi(t) := \max\left(1 - |t|,0\right) ~ \text{ and } ~ \Phi_{l,i}(t) := \Phi(2^l \cdot t - i)|_{[0,1]}
\end{equation*}
for $l,i \in \N_+$.
To construct the modified hierarchical linear basis let $I_l :=\{ i \in \mathbb{N}_+ \mid 1 \leq i \leq 2^{l} - 1, ~ i \text{ odd} \}$. 
For $l = 1$ we set $\gamma_{1,1} := 1$. For $l \geq 2$ we define
$ \gamma_{l,i} := \Phi_{l,i}$ for $i \in I_l \setminus \{1, 2^{l}-1\}$ and
$
\gamma_{l,1}(t) := \max\left(2 - 2^l t,0\right)|_{[0,1]}$ and $\gamma_{l,2^l -1}(t) := \gamma_{l,1}(1-t)$, see also Figure \ref{fig:sparseGrids}(left).

\begin{figure}[tb]
\centering
\includegraphics[height=4.5cm]{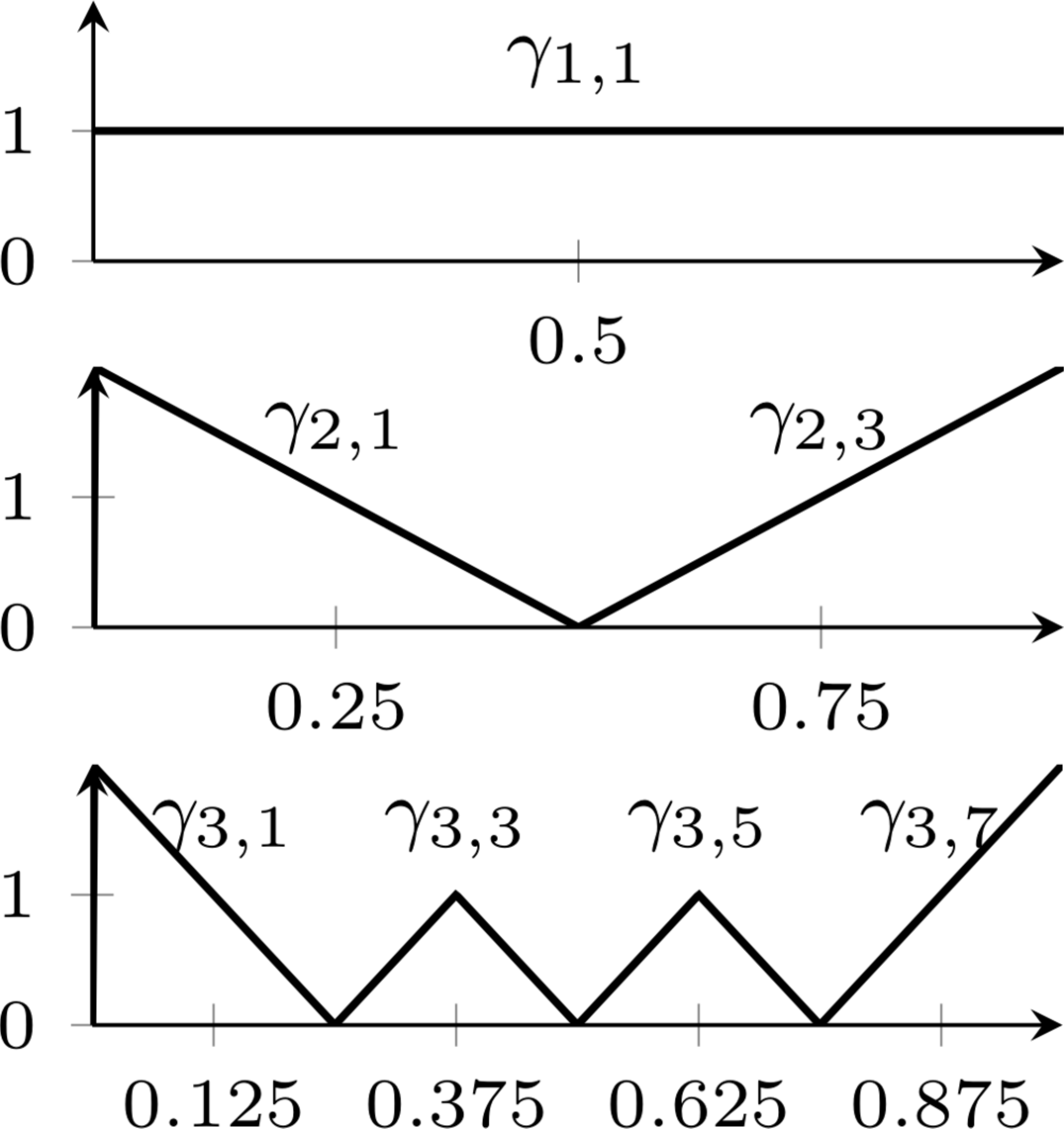} \hspace{0.5cm} 
 \includegraphics[height=4.4cm]{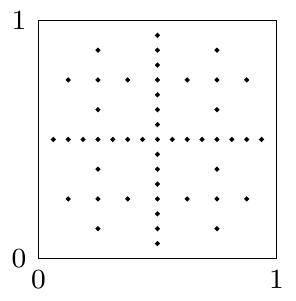} 
 \caption{Hierarchical basis functions up to $l = 3$ (left) and regular $2d$ sparse grid of level $\ell = 4$ (right).}
 \label{fig:sparseGrids}
\end{figure}

 The $d$-variate basis functions are then built via the tensor product construction
\begin{equation*}
 \gamma_{\mathbf{l,i}}(\bst) := \prod_{j=1}^d \gamma_{l_j,i_j}(t_j),
\end{equation*}
where $\mathbf{l} = (l_1,\ldots,l_d) \in \mathbb{N}_+^d$ is the multivariate level
 and $\mathbf{i} = (i_1,\ldots,i_d) \in \mathbb{N}_+^d$ denotes the multivariate position index.
 Let $\mathbf{I_l} := \otimes_{j=1}^d I_{l_j}$.
Then, $W_{\mathbf{l}} := \text{span} \left\{ \gamma_{\mathbf{l,i}} \mid \mathbf{i} \in \mathbf{I_l} \right\}$ denotes the so-called 
hierarchical increment space of level $\mathbf{l}$. 
 We now define the (regular) sparse grid space of level $\ell > 0$ by 
 \begin{equation} \label{sgspace}
 V^\ell := \bigoplus_{\mathbf{k} \in \mathbb{N}_+^d, |\mathbf{k}|_1 \leq \, \ell + d - 1} W_{\mathbf{k}}.
\end{equation}
 Instead of $2^{\ell d}$ degrees of freedom as in the full grid case, the sparse grid space only contains $M := \dim(V^l) = \cO(2^{\ell} \ell^{d-1})$ basis functions. 
 A $2d$ sparse grid, i.e.\ the centers of the supports of all basis function of $V^\ell$, can be found in Figure \ref{fig:sparseGrids}(right).
 For more details on sparse grids and a thorough comparison to full grids regarding cost complexity and approximation rates we refer to \cite{Bungartz.Griebel:2004}.
 
 Representing $f \in V^\ell$ in the hierarchical basis yields
 $$
 f(\bst) = \sum_{| \mathbf{k} |_1 \leq \ell + d - 1} \sum_{\mathbf{i} \in \mathbf{I_k}} \beta_{\mathbf{k,i}} \gamma_{\mathbf{k,i}}(\bst).
 $$
We now consider the Tikhonov-regularized version
 \begin{equation} \label{discTikReg} 
 \min_{f \in S}  \frac{1}{n} \sum_{i=1}^n (x_i - f(\bst_i))^2 + \lambda \| \vec{\bsbb} \|^2_{2}
 \end{equation}
of the least squares problem \eqref{sampleE} for $S = V^{\ell}$. Then, the coefficient vector $\vec{\bsbb}$ is given by
 \begin{equation} \label{eq:LGS} 
\left( {\bsB}^T {\bsB} + \lambda \bsI \right) \vec{\bsbb} = {\bsB}^T \vec{\bsx},
 \end{equation}
  where $\bsB \in \R^{N\times M}$ with entries $\bsB_{i,(\mathbf{l,j})} = \gamma_{\mathbf{l,j}}(\bst_i)$ 
and $\vec{\bsx} = (x_1,\ldots,x_N)^T$.
%
 We employ a conjugate gradient solver to obtain the solution. 
 For details on the linear equation system and the fast numerical treatment in the sparse grid case we refer to \cite{Feuersaenger:2010,pfluegerpeher}.
 
\subsection{Adaptive sparse grids.}

If certain spatial directions or regions are more important than others, 
e.g.\ when the solution of \eqref{discTikReg} varies strongly 
in one part of the domain but is almost constant in others, it is reasonable to adjust the underlying discretization to this behavior. 
To this end, space- and dimension-adaptive sparse grids can be employed \cite{Feuersaenger:2010,pfluegerpeher}. 
They adapt according to an error indicator which determines 
where the grid will be refined.
Here, we use a combination 
$$
\epsilon_{\mathbf{l,i}} := \beta_{\mathbf{l,i}} \sum_{j=1}^N \gamma_{\mathbf{l,i}}(\bst_j) \cdot \left(f(\bst_j) - x_j \right)^2,
$$
between the coefficients and the least-squares error. This serves to 
indicate how much $\gamma_{\mathbf{l,i}}$ contributes to the least squares error in the actual discretization.

The actual adaptive algorithm starts with $V = V^\ell$ for small $\ell$ 
and solves \eqref{discTikReg} to obtain the solution $f \in V$. 
Then, an initial compression step is performed, i.e.\ we mark all those basis functions from 
$V$ for which $\epsilon_{\mathbf{l,i}}$ is smaller than a fixed threshold. Subsequently, all marked basis functions are removed from $V$. 
However, due to the hierarchical structure, we do not remove $\gamma_{\mathbf{l,i}}$ if one of its successors, i.e.\ a
basis function whose support is a subset of the support of $\gamma_{\mathbf{l,i}}$, is not marked. 
Finally, we run a series of refinement steps, which consist of solving \eqref{discTikReg} over $V$, marking the $\cL > 0$ 
refinable\footnote{We call a basis function refinable 
if not all of its children are already included in the grid. By children of $\gamma_{\mathbf{l,i}}$ 
we mean all successors on levels $\mathbf{l} + \mathbf{e}_k$, where $\mathbf{e}_k$ denotes the $k$-th unit vector.}
basis functions in $V$ with the largest value of $\epsilon_{\mathbf{l,i}}$ and then refining the marked 
functions.
In this paper, we consider two different kinds of refinement: 
The first one, referred to as ``standard'' refinement, inserts all $2d$ children of each marked function. 
The second one, referred to as ``ANOVA'' refinement, only inserts children
in those directions $k$, for which $l_k > 1$. This ensures that $f$ remains constant in directions which the compression step has deemed to be 
irrelevant. The complete space- and dimension-adaptive procedure is described in Algorithm \ref{alg:dimadp}.
For more details on adaptivity, its relation to the anchored ANOVA decomposition and fast sparse 
grid traversal algorithms we refer to \cite{Feuersaenger:2010}.

\begin{algorithm} 
\caption{The adaptive sparse grid algorithm}
\begin{algorithmic}
 \STATE \textbf{Input:} $l \in \N$, Threshold $t > 0$, $\cL \in \N$, numIt $\in \N$. 
 \STATE \textbf{Output:} Adaptive sparse grid space $V$.
 \vspace{0.15cm}
\STATE Solve \eqref{discTikReg} over $V := V^l$ and compress($V$, $t$).
\FOR{$i=l\ldots\,$numIt}
 \STATE Solve \eqref{discTikReg} over $V$ and refine($V$, $\cL$).
\ENDFOR
\end{algorithmic}
\label{alg:dimadp}
\end{algorithm}

\subsection{The final preprocessing method.}


Let $\bsaa_1 := \mathbf{0}$ and let $\bsaa_i, i=2,\ldots, \bar{K} = \binom{d + m}{d}$ be an arbitrary enumeration of all 
indices $|\bsaa|_1 \leq m$ corresponding to the basis set $\cB^{(d)}_m$ of polynomials with total degree less than $m$.
The final ingredient to our overall algorithm is solving the unregularized least squares problem \eqref{sampleE} over
$\mathrm{span}(\cB^{(d)}_m)$ to build the polynomial surrogate $p = \sum_{i = 1}^{\bar{K}} c_{\bsaa_i} \bst^{\bsaa_i}$. 
 Its coefficients are the minimizers of
 \begin{equation} \label{eqn_quad_regress}
\min_{\vec{\bsw} \in \R^{\bar{K}}} \| \bsA \vec\bsw - \vec\bsx \|_2 = \min_{\vec{\bsw} \in \R^{\bar{K}}} \| \bsW \vec\bsw - \bsV^T\vec\bsx \|_2
\end{equation}
 and can be determined by backsubstitution after computing a QR decomposition
 of the Vandermonde matrix
 \begin{equation*} \label{eqn:LSmatrix}
	\bsV \bsW = \bsA := \left( \begin {array} {c c c c c} 
        1 & \bst_{1}^{\bsaa_2} & \ldots & \bst_{1}^{\bsaa_{\bar{K}}}  \\
	\vdots & \vdots & & \vdots \\
	1  & \bst_{N}^{\bsaa_2} & \ldots & \bst_{N}^{\bsaa_{\bar{K}}} \\ 
	\end {array} \right).
\end{equation*}

Now, let $\cC$ be the cumulative distribution function of the $k$-variate normal distribution $\cN(\mathbf{0}, \mathbf{I})$, 
which we apply to rescale the data to $[0,1]^k$ for the sparse grid algorithm. 
The final optimally rotated, adaptive sparse grid least-squares method is presented in Algorithm \ref{alg:final}.
If the distribution of the input data is known to be non-Gaussian, it might 
be more sensible to use a different transformation for the rescaling 
onto $[0,1]^k$.

\begin{algorithm} 
\caption{Optimally rotated, adaptive sparse grid least-squares algorithm}
\begin{algorithmic}
 \STATE \textbf{Input:} Initial data $\bsz = \{(\bst_i,x_i) \mid i = 1,\ldots,N \}$.
 \STATE \textbf{Output:} A $\bsQ \in V_k(\R^d)$ and a sparse grid function $f: [0,1]^k \to \R$ such that 
  $ f \circ \cC \circ \bsQ^T $ approximates $\bsz$.
 \vspace{0.15cm}
 \STATE Determine $p$ via \eqref{eqn_quad_regress}.
 \STATE Determine $\bsQ$ with Algorithm \ref{alg:manifoldCG}. 
 \STATE Transform data to $\tilde{\bsz} = (\cC(\bsQ^T \bst_i), x_i)_{i=1}^N$.
 \STATE Compute $f: [0,1]^k \to \R$ with Algorithm \ref{alg:dimadp} on $\tilde{\bsz}$. 
\end{algorithmic}
\label{alg:final}
\end{algorithm}


\section{Numerical results}
For our computations, we employ the SG++ sparse grid library \cite{pfluegerpeher} and choose the following para\-meters for all experiments: total degree $m = 3$, truncation parameter $k = \min(d,3)$, 
 compression threshold $t = 0.1$, number of points to refine $\cL = 10$, initial grid level $l = 3$. 
The CG algorithm for solving \eqref{eq:LGS} is iterated 
until the norm of the residual has decreased by a factor of $10^{-12}$.
To measure our performance, we use the normalized RMSE
$$
\text{NRMSE} :=  \sqrt{\frac{\sum_i \left(f\left(\cC\left(\bsQ^T(\tilde{\bst}_i)\right)\right) - \tilde{x}_i \right)^2}{\sum_i \tilde{x}_i^2}},
$$
where $(\tilde{\bst}_i,\tilde{x}_i)$ is some test data.
We compare this value to the NRMSE of Algorithm \ref{alg:dimadp} on 
the untransformed data.
Note that the runtimes of Algorithm \ref{alg:final} were (often magnitudes) smaller than the runtimes 
of Algorithm \ref{alg:dimadp} on the untransformed data set for all of our experiments.

\subsection{Two-dimensional ridge function.}
We draw $N = 10^5$ i.i.d.\ $\cN(\mathbf{0},\mathbf{I})$ distributed points
$\bst_i \in \R^2$ and choose the ridge function
$x_i = \tanh([\bst_i]_1 + [\bst_i]_2) + \varepsilon_i$, i.e.\ we evaluate $\tanh$ on the sum of the coordinates of each data vector 
and add i.i.d.\ white noise $\varepsilon_i \sim \cN(0,10^{-8})$.
We also create a test data set $(\tilde{\bst}_i, \tilde{x}_i)$ of size $N$ with the same distribution but
without the noise. 
Since $N$ is significantly larger than the sparse grid sizes we use, we set $\lambda = 0$. 
The refinement process is iterated until the number of grid points exceeds $500$.
The resulting errors for each refinement after the initial compression and the employed sparse grids are illustrated 
in Figure \ref{fig:TanhsparseGrids} for both Algorithm \ref{alg:final} and Algorithm \ref{alg:dimadp} on original data.

 As we observe, Algorithm \ref{alg:final} achieves an NRMSE, which is several magnitudes 
 smaller than the NRMSE of the adaptive sparse grid algorithm on the untransformed data for both ANOVA and standard refinement. 
 Obviously, the ANOVA refinement is too restrictive in the untransformed case and only standard refinement seems to converge.
 The remarkable performance of Algorithm \ref{alg:final} is obvious as the specific ridge function example has a rotated one-dimensional structure,  
 which the preprocessing is able to pick up. 
This can be seen in the grid in  
 Figure \ref{fig:TanhsparseGrids}(right), where most points are spent along the horizontal line in the middle of the domain. 

 \begin{figure}[t]
 \centering
  \includegraphics[width=8.5cm]{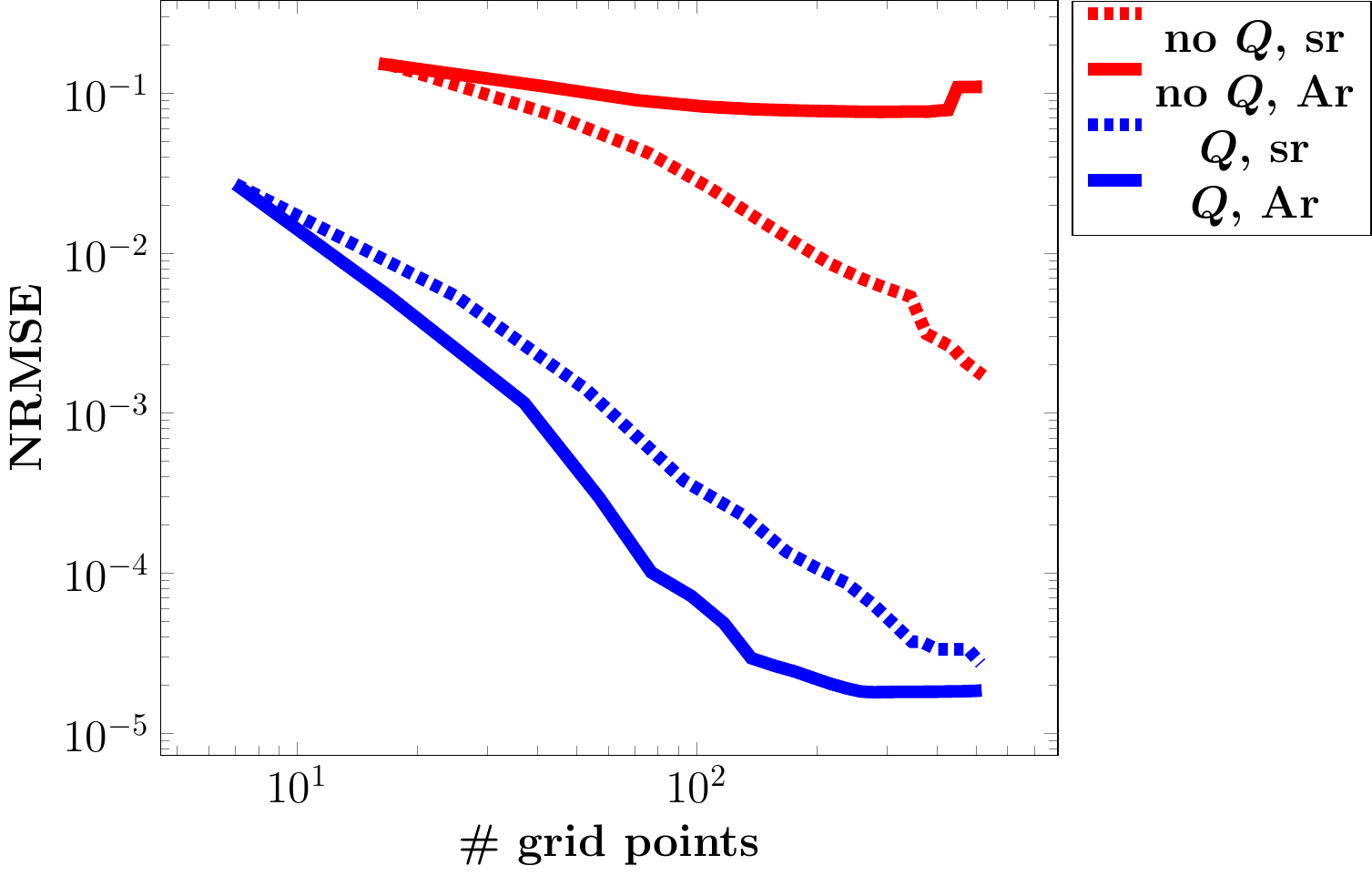}
 \includegraphics[height=4.5cm]{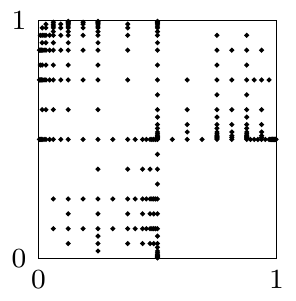} \hspace{0.5cm}
 \includegraphics[height=4.5cm]{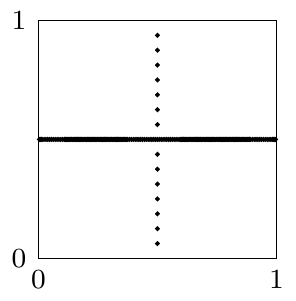}
 \caption{NRMSE for $2$d $\tanh$ ridge function (top). Ar = ANOVA refinement, \mbox{sr = standard refinement}.
 First $200$ sparse grid points inserted by Ar for untransformed data (bottom left) 
 and transformed data (bottom right).}
 \label{fig:TanhsparseGrids}
\end{figure}

\subsection{Five-dimensional sum of ridge functions.}
We draw $N=10^5$ points $\bst_i \sim \cN(\mathbf{0},\mathbf{I})$ in $\R^5$ and use 
$x_i = \tanh(\sum_{j=1}^5 [\bst_i]_j) + \max(0, \sum_{j=1}^5 (-1)^j [\bst_i]_j) + \varepsilon_i$ with $\varepsilon_i \sim \cN(0,10^{-8})$. 
Since this test function is non-smooth, it is more complicated than the one from the last section. Nonetheless, it is a simple sum of two ridge functions 
and our algorithm 
should be able to exploit this. We set $\lambda = 0$ and terminate the adaptive algorithm after the number of grid points has reached $1000$. 
The results can be found in Figure \ref{fig:5dridge}.
As in the previous example, we clearly see that the data transformation benefits the adaptive sparse grid algorithm significantly.

\begin{figure}[ht]
 \centering
 \includegraphics[width=8.5cm]{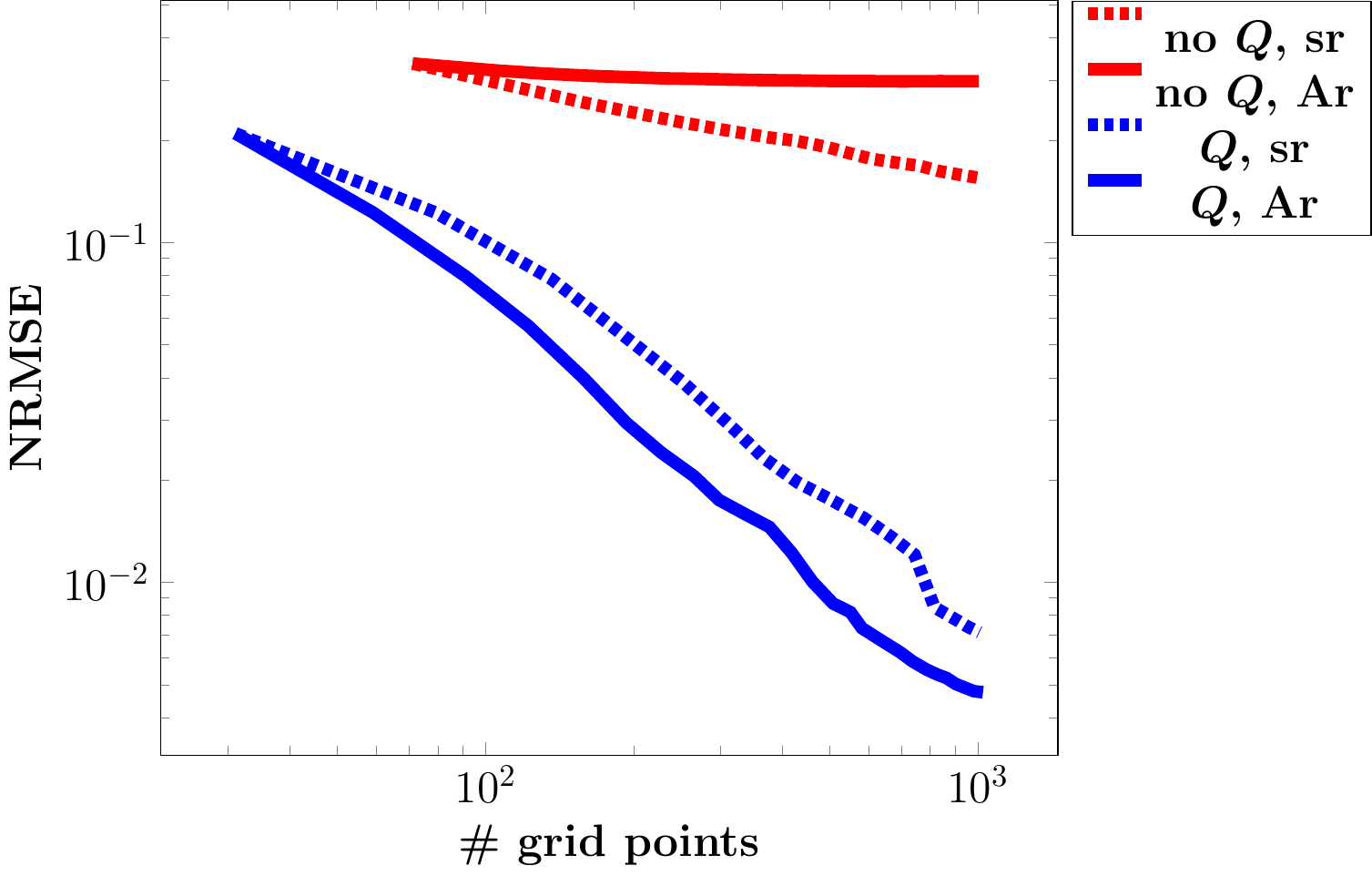}
 \caption{NRMSE for $5$d sum of ridge functions. Ar = ANOVA refinement, \mbox{sr = standard refinement.}}
 \label{fig:5dridge}
\end{figure}

\subsection{Ten-dimensional PDE problem.}
In this example from \cite{ConstantineHokanson}, $10^4$ vectors $\bst_i \in \R^{10}$ are drawn according to $\cN(\mathbf{0},\mathbf{I})$. 
These reflect the parameters of the diffusion coefficient $a$ in the two-dimensional elliptic PDE 
$$
- \nabla_\bss \cdot \left(a(\bss,\bst) \nabla_\bss u(\bss,\bst)  \right) = 1 ~~~~~~~~ \bss \in [0,1]^2
$$
with Neumann boundary conditions on the right side of the domain and Dirichlet zero boundary conditions on the other sides.
The $\bst_i$ represent the first ten coefficients of a truncated Karhunen-Lo\'eve decomposition of $\log(a)$ 
with correlation kernel $\exp(- \| \bsr - \bss \|_1 )$. For each $\bst_i$, the PDE is solved and $x_i$ is set to 
the spatial average of the solution on the Neumann boundary. 
Solving regression problems of this kind is an important task in uncertainty quantification, see \cite{ConstantineBook} 
for details.
We present averaged results over $20$ random splits of the data into $5000$ training and $5000$ test points for different regularization parameters $\lambda \in \{10^{-2}, 10^{-4}, 10^{-6}\}$
in Figure \ref{fig:PDE}.

The smallest error with the least amount of grid points is achieved with transformed data and ANOVA refinement. For this example, the ANOVA refinement performs
better than standard refinement also in the case of untransformed data. 
However, standard refinement seems to produce more stable results with respect to $\lambda$. 
For ANOVA refinement, transformed data and $\lambda = 10^{-4}$ we achieve an averaged NRMSE smaller than $0.1$, which is competitive with the best results from \cite{ConstantineHokanson}. 
For all choices of $\lambda$, we also outperform the LASSO and Gaussian processes approaches tested there.
\begin{figure}
 \centering
 \includegraphics[width=10cm]{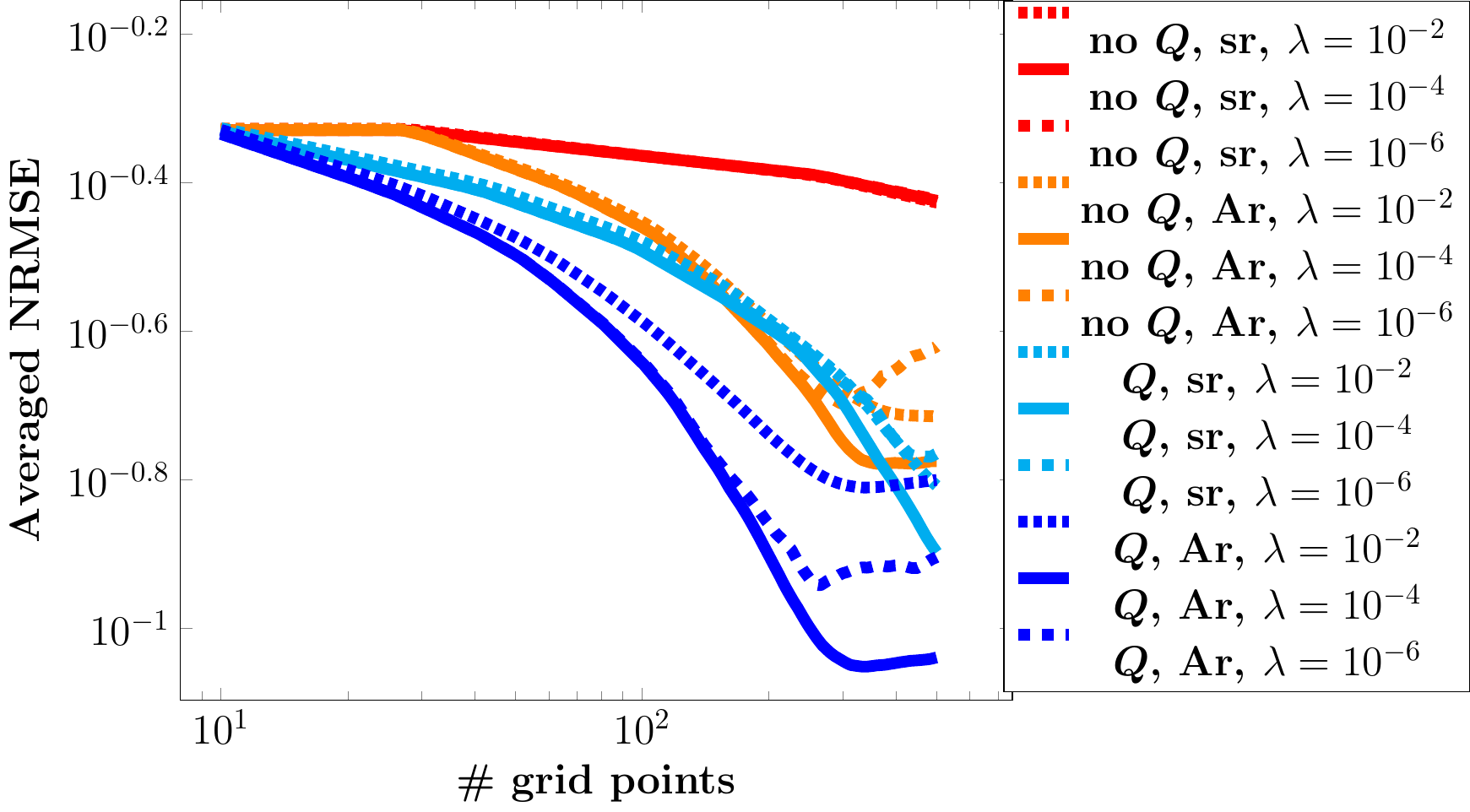}
 \caption{Averaged NRMSE for the $10$d PDE problem. Ar = ANOVA refinement, \mbox{sr = standard refinement}.}
 \label{fig:PDE}
\end{figure}

\section{Conclusion}

In this paper we have discussed the idea of preprocessing data in regression tasks in order to achieve a beneficial error decay and possibly smaller computational costs
of the underlying algorithm. Our approach is motivated by the ANOVA decomposition and works best with regression methods based 
on tensor-product functions, 
e.g.\ on full grids and sparse grids. 
We provided an efficient algorithm 
to find the optimal matrix $\bsQ \in V_k(\R^d)$ to transform the data at hand. 
Subsequently, we 
discussed an adaptive sparse grid least-squares regression algorithm, which is able to adapt to the 
underlying regressor function.
We showed how our preprocessing method significantly enhances the performance of the adaptive sparse grid algorithm for 
both artificial toy problems and a real-world application from uncertainty quantification. 


\bibliography{literature}
\bibliographystyle{abbrv}

\end{document}